\definecolor{blue}{rgb}{0,0.2,0.5}
\definecolor{green}{rgb}{0.1,0.35,0.0}
\definecolor{red}{rgb}{0.5,0.0,0.0}
\definecolor{purple}{rgb}{0.4,0,0.6}
\definecolor{cyan}{rgb}{0.0,0.4,0.3}
\definecolor{orange}{rgb}{0.6,0.4,0.0}
\definecolor{gray}{rgb}{0.3,0.3,0.3}
\title{Variational Flow Matching for Graph Generation}
\theoremstyle{definition}
\newtheorem{lemma}{Lemma}
\newtheorem{theorem}{Theorem}
\author{%
   Floor Eijkelboom$^*$\\
   UvA-Bosch Delta Lab \\
   University of Amsterdam \\
  \And
  Grigory Bartosh$^*$ \\
  AMLab \\
  University of Amsterdam
  \AND
  Christian A. Naesseth \\
  UvA-Bosch Delta Lab \\
  University of Amsterdam
  \And
  Max Welling \\
  UvA-Bosch Delta Lab \\
  University of Amsterdam
  \And
  Jan-Willem van de Meent \\
  UvA-Bosch Delta Lab \\
  University of Amsterdam
}
\begin{document}

\maketitle
\def\thefootnote{*}\footnotetext{These authors contributed equally to this work.}\def\thefootnote{\arabic{footnote}}

\begin{abstract}

We present a formulation of flow matching as variational inference, which we refer to as variational flow matching (VFM). Based on this formulation we develop CatFlow, a flow matching method for categorical data. CatFlow is easy to implement, computationally efficient, and achieves strong results on graph generation tasks. The key observation in VFM is that we can parameterize the vector field of the flow in terms of a variational approximation of the posterior probability path, which is the distribution over possible end points of a trajectory. We show that this variational interpretation admits both the CatFlow objective and the original flow matching objective as special cases. We also relate VFM to score-based models, in which the dynamics are stochastic rather than deterministic, and derive a bound on the model likelihood based on a reweighted VFM objective. We evaluate CatFlow on one abstract graph generation task and two molecular generation tasks. In all cases, CatFlow exceeds or matches performance of the current state-of-the-art.

\end{abstract}

\section{Introduction}

In recent years, the field of generative modeling has seen notable advancements. In image generation \cite{ramesh2022hierarchical, rombach2022high},  the development and refinement of diffusion-based approaches --- specifically those using denoising score matching \cite{vincent2011connection} ---   have proven effective for generation at scale  \cite{ho2020denoising, song2020score}. 
However, while training can be done effectively, the constrained space of sampling probability paths in a diffusion requires tailored techniques to work \cite{song2020denoising, zhang2022fast}. This is in contrast to more flexible approaches such as continuous normalizing flows (CNFs) \cite{chen2018neural}, that are able to learn a more general set of probability paths than diffusion models \cite{song2021maximum}, at the expense of being expensive to train as they require one to solve an ODE during each training step (see e.g. \cite{ben2022matching, rozen2021moser, grathwohl2018scalable}).

Recently, Lipman and collaborators \cite{lipman2023flow} proposed flow matching (FM), an efficient and simulation-free approach to training CNFs. Concretely, they use a per-sample interpolation to derive a simpler objective for learning the marginal vector field that generates a desired probability path in a CNF. This formulation provides equivalent gradients without explicit knowledge of the (generally intractable) marginal vector field. This work has been extended to different geometries \cite{chen2024flow, klein2023equivariant} and various applications \cite{wildberger2024flow, dao2023flow, ebrahimy2010stock, kohler2023flow}. Similar work has been proposed concurrently in \cite{liu2022flow, albergo2023stochastic}.


This paper identifies a reformulation of flow matching that we  refer to as variational flow matching (VFM). In flow matching, the vector field at any point can be understood as the expected continuation toward the data distribution. In VFM, we explicitly parameterize the learned vector field as an expectation relative to a variational distribution. The objective of VFM is then to minimize the Kullback-Leibler (KL) divergence between the posterior probability path, i.e.~the distribution over possible end points (continuations) at a particular point in the space, and the variational approximation. 

We show that VFM recovers the original flow matching objective when the variational approximation is Gaussian and the conditional vector field is linear in the data. Under the assumption of linearity, a solution to the VFM problem is also exact whenever the variational approximation matches the \emph{marginals} of the posterior probability path, which means that we can employ a fully-factorized variational approximation without loss of generality.

While VFM provides a general formulation, our primary interest in this paper is its application to graph generation, where the data are categorical. This setting leads to a simple method that we refer to as CatFlow, in which the objective reduces to training a classifier over end points on a per-component basis. We apply CatFlow to a set of graph generation tasks, both for abstract graphs \cite{martinkus2022spectre} and molecular generation \cite{ramakrishnan2014quantum,irwin2012zinc}. By all metrics, our results match or substantially exceed those obtained by existing methods.

\section{Background}

\subsection{Transport Framework for Generative Modeling and CNFs}

Common generative modeling approaches such as normalizing flows \cite{rezende2015variational, papamakarios2021normalizing} and diffusion models \cite{ho2020denoising, song2020score} parameterize a transformation $\varphi$ from some initial tractable probability density $p_0$ -- typically a standard Gaussian distribution -- to the target data density $p_1$. In general, there is a trade-off between allowing $\varphi$ to be expressive enough to model the complex transformation while ensuring that the determinant term is still tractable. One such transformation is a continuous normalizing flow (CNF).

Any time-dependent\footnote{Time dependence is denoted through the subscript $t$ throughout this paper.}  vector field $v_t: [0, 1] \times \mathbb{R}^D \to \mathbb{R}^D$ gives rise to such a transformation -- called a \textit{flow} -- as such a field induces a time-dependent diffeomorphism $\varphi_t : [0, 1] \times \mathbb{R}^D \to \mathbb{R}^D$ defined by the following ordinary differential equation (ODE):
\begin{align}
    \frac{d}{dt} \varphi_t(x) &= v_t (\varphi_t (x)) \text{ with initial conditions } \varphi_0(x) = x.
\end{align}
In CNFs, this vector field is learned using a neural network $v_t^{\theta}$. Through the change of variables formula $p_t(x)$ can be evaluated (see \cref{app:cnf}) and hence one could try and optimize the empirical divergence between the resulting distribution $p_1$ and target distribution. However, obtaining a gradient sample for the loss requires ones to solve the ODE induced during training, making this approach computationally expensive.


\subsection{Flow Matching}
In flow matching \cite{lipman2023flow}, the aim is to regress the underlying vector field of a CNF directly on the interval $ t \in [0, 1]$. Flow matching leverages the fact that even though we do not have access to the \textit{actual} underlying vector field -- which we denote as $u_t$ -- and probability path $p_t$, one can construct a per-example formulation by defining \textit{conditional flows}, i.e. the trajectories towards specific datapoints $x_1$. Concretely, FM sets: 
\begin{equation}
\label{eq:fm}
    u_t(x) = \int u_t (x \mid x_1) \frac{p_t(x \mid x_1) p_\text{data}(x_1)}{p_t(x)} \mathrm{d}x_1,
\end{equation}
where $u_t(x \mid x_1)$ is the conditional trajectory. The most common way to define $u_t(x \mid x_1)$ is as the straight line continuation from $x$ to $x_1$, implying one can obtain samples $x \sim p_t(x \mid x_1)$ simply by interpolating samples $x_0 \sim p_0$ for some $p_0$ and $x_1 \sim p_1$, 
\begin{equation}
    u_t(x \mid x_1) := \frac{x_1 - x}{1 - t} \implies x = t x_1 + (1-t) x_0 \text{ is a sample } x \sim p_t(x \mid x_1).
\end{equation}

 Crucially, the flow matching objective
\begin{equation}
    \mathcal{L}_{\text{FM}}(\theta) 
    =
    \mathbb{E}_{t \sim [0, 1], x \sim p_t(x)} 
    \left[
        \big\|v_t^{\theta}(x) - u_t(x)\big\|_2^2
    \right]
\end{equation}
is equivalent in expectation (up to a constant) to the conditional flow matching objective
\begin{equation}
    \mathcal{L}_{\text{CFM}}(\theta) 
    = 
    \mathbb{E}_{t \sim [0, 1], x_1 \sim p_{\text{data}}(x_1), x \sim p_t(x \mid x_1)} 
    \left[
        \big\|v_t^{\theta}(x) - u_t(x \mid x_1)\big\|_2^2
    \right].
\end{equation}
An advantage of flow matching is that this conditional trajectory $u_t(x \mid x_1)$ can be chosen to make the problem tractable. The authors show that diffusion models can be instantiated as flow matching with a specific conditional trajectory, but also show that assuming a simple, straight-line trajectory leads to more efficient training. Note that in contrast to likelihood-based training of CNFs, flow matching is simulation free, leading to a scalable approach to learning CNFs.

\section{Variational Flow Matching for Graph Generation}


\label{sec:method}


We derive CatFlow through a novel, variational view on flow matching we call \textit{Variational Flow Matching (VFM)}. The VFM framework relies on two insights. First, we can define the marginal vector field and its approximation in terms of an expectation with respect to a distribution over end points of the transformation. This implies that we can map a flow matching problem onto a variational counterpart. 
Second, under typical assumptions on the forward process, we can decompose the expected conditional vector field into components for individual variables which can be computed in terms of the \textit{marginals} of the distribution over end points of the conditional trajectories. This implies that, without loss of generality, we can solve a VFM problem using a fully-factorized variational approximation, providing a tractable approximate vector field. For categorical data, the corresponding vector field can be computed efficiently via direct summation. This results in a closed-form objective to train CNFs for categorical data, which we refer to as \textit{CatFlow}. We develop the theory of VFM in \cref{sec:method} and we relate VFM to flow matching and score-based diffusion in \cref{sec:connections}.

\subsection{Flow Matching using Variational Inference}

In any flow matching problem, the vector field in \cref{eq:fm} can be expressed as an expectation 
\begin{align}
    \label{eq:marginal_u}
    u_t(x) 
    &= \int u_t (x \mid x_1) p_t(x_1 \mid x) \mathrm{d}x_1 
    = \mathbb{E}_{p_t(x_1 \mid x)} \left[u_t(x \mid x_1)\right],
\end{align}
where $p_t(x_1 \mid x)$ is the posterior probability path, the distribution over possible end points $x_1$ of paths passing through $x$ at time $t$, 
\begin{align}
    p_t(x_1 \mid x) &:= \frac{p_t(x, x_1)}{p_t(x)}, &
    p_t(x, x_1) &:= p_t(x \mid x_1) \: p_\text{data}(x_1).
\end{align}
This makes intuitive sense: the velocity in point $x$ is given by all the continuations from $x$ to final points $x_1$, weighted by how likely that final point is given that we are at $x$. Note that to compute $u_t(x)$, one has to evaluate a joint integral over $D$ dimensions. 

This observation leads us to propose a change in parameterization of the learned vector field. Rather than predicting the components of the vector field directly, we can define an approximate vector field in terms of an expectation with respect to a variational distribution $q_t^{\theta}$ with parameters $\theta$,
\begin{equation}
    \label{eq:vfm-variational-flow}
    v_t^{\theta}(x) := \int u_t(x \mid x_1) \: q_t^{\theta}(x_1 \mid x) \: \mathrm{d}x_1.
\end{equation}
Clearly, in this construction $v_t^{\theta}(x)$ will be equal to $u_t(x)$ when $q_t^{\theta}(x_1 \mid x)$ and $p_t(x_1 \mid x)$ are identical. This implies that we can map a flow matching problem onto a variational inference problem. 

Concretely, we can define a variational flow matching problem by minimizing the Kullback-Leibler (KL) divergence from $p_t$ to $q_t^\theta$, which we can express as
\begin{equation}
\label{eq:nll_der}
    \mathbb{E}_{t}
    \left[
        \text{KL}\big(p_t(x) p_t(x_1 \mid x) ~||~ p_t(x) q^{\theta}_t(x_1 \mid x) \big)   
    \right]
    =
    -\mathbb{E}_{t, x, x_1}
    \left[\log q_t^\theta(x_1 \mid x) \right] + \text{const},
    \end{equation}
where $t\sim\text{Uniform}(0,1)$ and $x, x_1 \sim p_t(x, x_1)$ (see \cref{app:proofs_vfm} for derivations).
This leads us to propose the \textit{variational flow matching (VFM)} objective
\begin{equation}
    \mathcal{L}_{\text{VFM}}(\theta) = -\mathbb{E}_{t, x, x_1} \left[ \log q_t^\theta(x_1 \mid x) \right].
\end{equation}

While this variational formulation of flow matching is promising, two potential drawbacks emerge in practical applications. First, although it is feasible to reformulate any flow matching problem as a variational inference task, doing so requires learning an approximation of a potentially complex, high-dimensional distribution  $p_t(x_1 \mid x)$ (including any correlations between the different $x_1^d$). Second, representing $v^\theta_t(x)$ as an expectation may pose intractability challenges. Interestingly, under typical assumptions about the conditional velocity field in flow matching, this objective simplifies significantly, making it no more computationally demanding than standard flow matching, a point we will further address in \cref{sec:decomp_fm}.

\subsection{Mean-Field Variational Flow Matching}
\label{sec:decomp_fm}


\paragraph{Decomposing the conditional vector field.}

At first glance, we do not seem to obtain much from this variational view due to the intractability of $p_t(x_1 \mid x)$ and $v_t^{\theta}(x)$. Fortunately, we can simplify the objective and the calculation of the marginal vector field under the typical case where the conditional vector field $u_t(x \mid x_1)$ is linear in $x_1$, such as in straight line interpolations commonly used in flow matching. This leads to two simplifications. First, we notice that the expected value of a linear conditional velocity field simply equals the conditional velocity field towards the expectation/mean of the distribution we are considering, i.e.
\begin{equation}
    u_t(x) = \mathbb{E}_{p_t(x_1 \mid x)} \left[u_t(x \mid x_1) \right] = u_t(x \mid \mathbb{E}_{p_t}\left[x_1 \mid x\right]) \text{ if $u_t(x \mid x_1)$ is linear in $x_1$}.
\end{equation}
This means that as long as our variational distribution has the same mean as $p_t(x_1 \mid x)$, we will learn the same underlying field. Second, we realize that the expected value of $x_1^d$ only depends on $p_t(x_1^d \mid x)$, and thus as such as long as $q_t^{\theta}(x_1 \mid x)$ has the same marginal expectations $\mathbb{E}\left[x_1^d \mid x \right]$, we will learn the same field as under $p_t(x_1 \mid x)$. In other words, it suffices to learn a fully-factorized approximation $q_t^{\theta}(x_1 \mid x)$, there is no need to fully characterize the covariance of the posterior probability path. Thich allows us to reduce a  high-dimensional variational problem to a series of low dimension problems.  

Formally, the following holds:

\begin{restatable}{theorem}{vfmmeanfield}
\label{th:vfm_mean_field}
    Assume that the conditional vector field $u_t(x \mid x_1)$ is linear in $x_1$. Then, for any distribution $r_t(x_1 \mid x)$ such that the marginal distributions coincide with those of $p_t(x_1 \mid x)$, the corresponding expectations of $u_t(x \mid x_1)$ are equal, i.e.
    \begin{equation}
        \mathbb{E}_{r_t(x_1 \mid x)}\left[u_t(x \mid x_1)\right] = \mathbb{E}_{p_t(x_1 \mid x)}\left[u_t(x \mid x_1)\right].
    \end{equation}
\end{restatable}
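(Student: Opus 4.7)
The plan is to exploit linearity in $x_1$ to collapse the $D$-dimensional expectation to a function of only the per-coordinate first moments of $x_1$, which by assumption agree between $r_t$ and $p_t$. First I would make the linearity assumption explicit by writing
\begin{equation}
    u_t(x \mid x_1) = A_t(x)\, x_1 + b_t(x),
\end{equation}
where $A_t(x) \in \mathbb{R}^{D \times D}$ and $b_t(x) \in \mathbb{R}^D$ depend on $t$ and $x$ but not on $x_1$ (this captures "linear in $x_1$"; an affine term $b_t(x)$ is allowed, since the argument is unchanged). Applying linearity of expectation to any distribution $s_t(x_1 \mid x)$ then gives
\begin{equation}
    \mathbb{E}_{s_t(x_1 \mid x)}\!\left[u_t(x \mid x_1)\right] = A_t(x)\, \mathbb{E}_{s_t(x_1 \mid x)}[x_1] + b_t(x).
\end{equation}

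The key step is to observe that the right-hand side depends on $s_t$ only through the vector of first moments $\mathbb{E}_{s_t}[x_1]$, whose $d$-th component is $\int x_{1,d}\, s_t(x_{1,d} \mid x)\, dx_{1,d}$, a functional purely of the $d$-th marginal of $s_t(x_1 \mid x)$. By the hypothesis that $r_t$ and $p_t$ share all one-dimensional marginals, $\mathbb{E}_{r_t}[x_{1,d}] = \mathbb{E}_{p_t}[x_{1,d}]$ for every $d$, so $\mathbb{E}_{r_t}[x_1] = \mathbb{E}_{p_t}[x_1]$, and substituting back into the affine form yields the claimed equality. There is no real obstacle here; the only subtlety worth flagging is the scope of "linear", namely that $A_t(x)$ must be independent of $x_1$ — otherwise the expectation would couple to higher-order moments and the marginals alone would not suffice. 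The straight-line choice $u_t(x \mid x_1) = (x_1 - x)/(1-t)$ used throughout the paper manifestly satisfies this, with $A_t(x) = \tfrac{1}{1-t} I$ and $b_t(x) = -\tfrac{x}{1-t}$.
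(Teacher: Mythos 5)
Your proof is correct and follows essentially the same route as the paper's: write $u_t(x\mid x_1)=A_t(x)x_1+b_t(x)$, pull the expectation through by linearity, and observe that the resulting first-moment vector is determined coordinate-wise by the one-dimensional marginals, which agree by hypothesis. Your explicit remark that each component $\mathbb{E}_{s_t}[x_{1,d}]$ is a functional of the $d$-th marginal alone makes slightly more precise a step the paper states without elaboration, but the argument is the same.
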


%




We provide a proof in \cref{app:proofs_mean_field}. 

It follows directly from \cref{th:vfm_mean_field} that \textit{without loss of generality} we can consider the considerably easier task of a fully-factorized approximation
\begin{equation}
    q_t^{\theta}(x_1 \mid x) := \prod_{d=1}^D q_t^{\theta}(x_1^d \mid x).
\end{equation}

We refer to this special case as \textit{mean-field variational flow matching} (MF-VFM), and the VFM objective reduces to
\begin{align}
    \mathcal{L}_{\text{MF-VFM}}(\theta) 
    =
    - \mathbb{E}_{t, x, x_1}
    \left[\log q_t^\theta(x_1 \mid x) \right] 
    = - \mathbb{E}_{t, x, x_1} \left[ \sum_{d=1}^D \log q_t^{\theta}(x_1^d \mid x) \right].
\end{align}
Even though we use a factorized model, it is worth emphasizing that \textit{due to the linearity of the conditional field, a mean-field variational distribution can learn the solution exactly}.

\paragraph{Computing the marginal vector field.}

To calculate the vector field $v_t^\theta(x)$, we can simply substitute the factorized distribution $q^{\theta}_t(x_1 \mid x)$ into \cref{eq:vfm-variational-flow}. However, this still requires an evaluation of an expectation. Fortunately, leveraging the linearity condition significantly simplifies this computation, since as long as we have access to the first moment of one-dimensional distributions $q^{\theta}_t(x_1^d \mid x)$, we can efficiently calculate $v_t^\theta(x)$ by simply considering the conditional field towards it. Note that for many distributions -- e.g. Gaussian -- learning its parameters is equivalent to learning its expected value. 
Note that the training procedure will differ for two distinct distributions -- e.g. Gaussian versus Categorical -- so the form of the distribution $q^{\theta}_t(x_1 \mid x)$ remains practically important, a flexibility provided through the variational view on flow matching.

If we now use the standard flow matching case of using a conditional vector field based on a linear interpolation, the approximate vector field can be expressed in terms of the first moment of the variational approximation: 
\begin{align}
    \label{eq:catflow_vector_field}
    v^{\theta}_t(x) 
    &=       
    \mathbb{E}_{q^\theta_t(x_1 \mid x)}
    \left[
       \frac{x_1 - x}{1-t} 
    \right]
    = 
    \frac{\mu_1 - x}{1-t} 
    ,
    &
    \mu_1 &:= \mathbb{E}_{q^\theta_t(x_1 \mid x)}\left[ x_1 \right].
\end{align}
Note that this covers both the case of categorical data, which we focus on in this paper, and the case of continuous data, as considered in traditional flow matching methods. We provided the general algorithm for training and generation in \cref{app:general_alg}.

At first glance, the linearity condition of the conditional vector field $u_t(x \mid x_1)$ in \cref{th:vfm_mean_field} might seem restricting. However, in most state-of-the-art generative modeling techniques, this condition is satisfied, e.g. diffusion-based models, such as flow matching \cite{lipman2023flow, albergo2022building, liu2022flow}, diffusion models \cite{ho2020denoising, song2020score}, and models that combine the injection of Gaussian noise with blurring \cite{rissanen2022generative, hoogeboom2022blurring}, among others \cite{daras2022soft, singhal2023diffuse}.

\begin{figure}[t]
	\centering
	\includegraphics[width=0.95\textwidth]{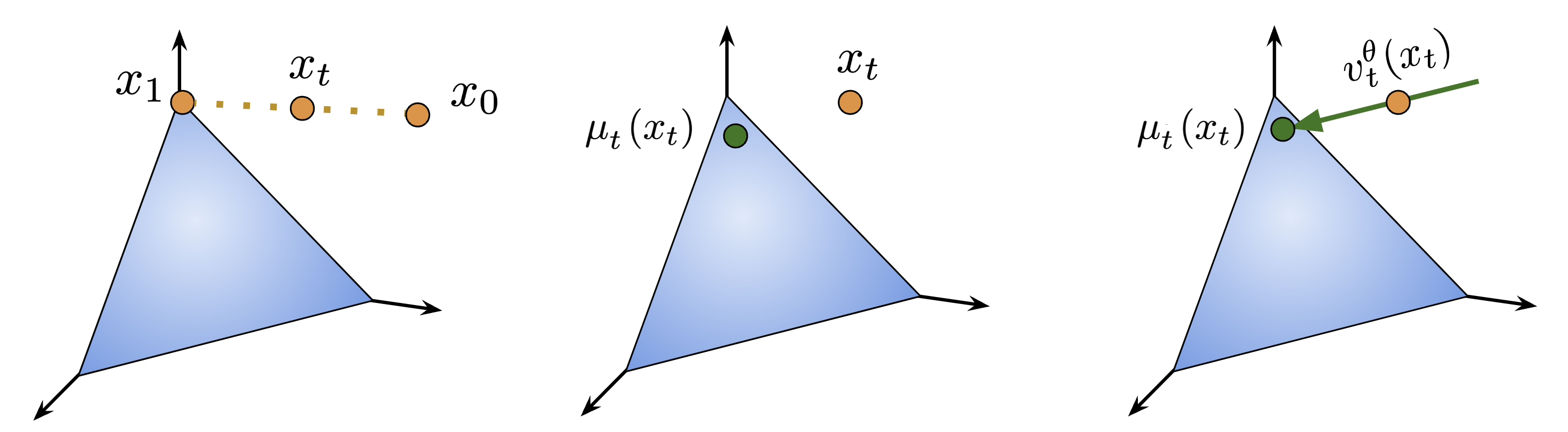}
	\caption{Parameterization of the vector field in CatFlow. Given an interpolant $x_t = t x_1 + (1-t) x_0$, CatFlow predicts a categorical distribution $q^\theta_t(x_1 \mid x_t)$ parameterized by a vector 
    $\mu_t(x_t)$. The resulting construction for the vector field $v_t^\theta(x_t) = (\mu_t(x_t) - x_t)/(1-t)$ ensures that trajectories converge to a point on the simplex at $t=1$.}
	\label{fig:rep_vel}
\end{figure}






\subsection{CatFlow: Mean-Field Variational Flow Matching for Categorical Data}

\paragraph{The CatFlow Objective}

In CatFlow, we directly apply the VFM framework to the categorical case. Let our parameterised variational distribution $q^{\theta}_t(x_1^d \mid x) = \mathsf{Cat}(x^d_1 \mid \theta_t^d(x))$, and let us denote the parameters of this categorical as 
\begin{equation}
\mu_t^{dk}(x) 
:= q^\theta_t(x_1^{d} = k \mid x)
= \mathbb{E}_{q^\theta_t(x_1 \mid x)}[\mathbb{I}[x_1^d = k]].
\end{equation}
Then, the $d$th component of the learned vector field is
\begin{equation}
    v_t^{\theta, d}(x) := \sum_{k=1}^{K_d} \mu^{dk}_t(x) \frac{\mathbb{I}[x_1^d = k] - x}{1-t}.
\end{equation}
Intuitively, CatFlow learns a \textit{distribution} over the conditional trajectories to all corners of the probability simplices, rather than regressing towards an expected conditional trajectory.

In the categorical setting, the MF-VFM objective can be written out explicitly. Writing out the probability mass function of the categorical distribution, we see that
\begin{equation}
    \log q_t^\theta(x_1^d \mid x) = \log \prod_{k=1}^{K^d}  (\mu_t^{dk}(x))^{\mathbb{I}[x_1^d = k]} = \sum_{d=1}^D \mathbb{I}[x_1^d = k]\log  \mu_t^{dk}(x).
\end{equation}
As such, we find that \textit{CatFlow} objective is given by a standard cross-entropy loss:
\begin{equation}
    \mathcal{L}_{\text{CatFlow}}(\theta) =  - \mathbb{E}_{t, x, x_1} \left[\sum_{d=1}^D \sum_{k=1}^{K^d} \mathbb{I}[x_1^d = k] \log \mu^{dk}_t(x)\right].
\end{equation}

Note, however, that when actually computing $v_t^\theta$, this can be done efficiently, since
\begin{equation}
    \mathbb{E}_{q_t^\theta(x_1^d \mid x)} \left[ u_t(x^d \mid x_1^d) \right] = \mathbb{E}_{q_t^\theta(x_1^d \mid x)} \left[ \frac{x_1^d - x^d}{1 - t} \right] = \frac{\mu^d(x) - x^d}{1-t},
\end{equation}
since $\mu^d(x) := \mathbb{E}_{q_t^\theta(x_1^d \mid x)}[u_t^d(x_1^d \mid x)]$ and the other terms are not in the expectation. Note that this geometrically corresponds to learning the mapping to a point in the probability simplex, and then flowing towards that.
This procedure is illustrated in \cref{fig:rep_vel}. Because of this, training CatFlow is no less efficient than flow matching. We provided the algorithm for training and generation for CatFlow and a Gaussian VFM objective in \cref{app:categorical_alg} and \cref{app:gaussian_alg} respectively.

More precisely, training CatFlow offers two key benefits over standard flow matching. First, given that CatFlow predicts points in the probability simplex and parametrizes the velocities to point into it, an inductive bias is introduced ensuring all generative paths align with realistic trajectories, hence avoiding misaligned paths.
Second, using a cross-entropy loss instead of a mean-squared error improves gradient behavior during training. Both aspects enhance learning dynamics and speed up convergence, which is evaluated in \cref{ssec:ablation}.
Lastly, CatFlow's ability to learn probability vectors -- rather than directly choosing classes as is common in discrete approaches -- allows the model to express uncertainty about variables at a specific time. This is especially useful in complex domains like molecular generation, where initial uncertainty about components decreases as more structure is established, leading to more precise predictions.

\paragraph{Permutation Equivariance.}
Graphs, defined by vertices and edges, lack a natural vertex order unlike other data types. This permutation invariance means any vertex labeling represents the same graph if the connections remain unchanged. Note that even though the following results apply to graphs, an unordered set of categorical variables can be described by a graph without edges. Under natural conditions -- see \cref{app:proofs} -- we ensure CatFlow adheres to this symmetry (see \cref{app:catflow} for the proof).
\begin{theorem}
    CatFlow generates exchangeable distributions, i.e. CatFlow generates all graph permutations with equal probability.
\end{theorem}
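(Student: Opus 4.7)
The plan is to establish exchangeability by combining an invariance property of the initial distribution with a permutation equivariance property of the learned vector field, and then invoking uniqueness of solutions to the flow ODE to transport the invariance to the generated distribution.

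First, I would make the ``natural conditions'' explicit: the base density $p_0$ is taken as a product of identical marginals over the node and edge slots, so that for any node permutation $\pi$ (with its induced action on edges) one has $\pi_\# p_0 = p_0$; and the neural parameterization $\theta_t(x)$ of $q_t^\theta(x_1 \mid x) = \prod_d \mathsf{Cat}(x_1^d \mid \theta_t^d(x))$ is chosen to be permutation equivariant, $\theta_t(\pi x) = \pi\, \theta_t(x)$, which is the standard requirement on a graph neural network acting on categorical node/edge features. Under these conditions, the MF-VFM vector field derived in the previous subsection,
\begin{equation}
    v_t^\theta(x) = \frac{\theta_t(x) - x}{1-t},
\end{equation}
is immediately permutation equivariant, since both $\theta_t$ and the identity commute with $\pi$, and subtraction and scalar multiplication act componentwise.

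Next, I would lift equivariance of $v_t^\theta$ to equivariance of the associated flow map $\varphi_t$. Define $\tilde\varphi_t(x) := \pi^{-1} \varphi_t(\pi x)$; differentiating and using $v_t^\theta(\pi y) = \pi v_t^\theta(y)$ gives
\begin{equation}
    \tfrac{d}{dt} \tilde\varphi_t(x) = \pi^{-1} v_t^\theta(\varphi_t(\pi x)) = v_t^\theta(\tilde\varphi_t(x)), \qquad \tilde\varphi_0(x) = x,
\end{equation}
so by uniqueness of ODE solutions $\tilde\varphi_t = \varphi_t$, i.e.\ $\varphi_t \circ \pi = \pi \circ \varphi_t$. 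Pushing $p_0$ forward and using $\pi_\# p_0 = p_0$ then yields $\pi_\# p_1 = \pi_\# (\varphi_1)_\# p_0 = (\varphi_1)_\# \pi_\# p_0 = (\varphi_1)_\# p_0 = p_1$, which is the claim of exchangeability. To obtain the ``equal probability over graph permutations'' wording, I would finish by noting that for any graph $G$ and any relabeling $\pi G$, the densities assigned by $p_1$ agree, so each isomorphic representative is generated with the same probability.

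The main obstacle is bookkeeping rather than analysis: one needs to verify that the joint $S_n$-action on node and edge slots is respected by every ingredient of CatFlow, including (i) the equivariance of the architecture producing $\theta_t$, (ii) the fact that the per-component categorical sampling step $x_1 \sim q_t^\theta(\cdot \mid x)$ preserves equivariance in distribution because the factorization is indexed by the same slots that $\pi$ permutes, and (iii) the fact that the state space together with the straight-line interpolation used as the conditional flow is closed under $\pi$. Once these compatibility checks are in place, the ODE-symmetry argument above closes the proof.
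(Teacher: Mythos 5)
Your proposal is correct and follows essentially the same route as the paper: establish permutation equivariance of $v_t^\theta$ from equivariance of $\theta_t$, lift it to the flow map, and combine with exchangeability of $p_0$ to conclude $\pi_\# p_1 = p_1$. Your intermediate step invoking uniqueness of ODE solutions to show $\varphi_t \circ \pi = \pi \circ \varphi_t$ is in fact slightly more careful than the paper's version, which implicitly assumes the permuted initial condition yields the permuted trajectory.
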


\section{Flow Matching and Score Matching: Bridging the Gap}
\label{sec:connections}

In this section, we relate the VFM framework to existing generative modeling approaches. First, we show that VFM has standard flow matching as a special case when the variational approximation is Gaussian. This implies that VFM provides a more general approach to learning CNFs. Second, we show that through VFM, we are not only able to compute the target vector field, but also the score function as used in score-based diffusion. This has two primary theoretical implications: 1) VFM simultaneously learns deterministic and stochastic dynamics -- as diffusion models rely on stochastic dynamics, and 2) VFM provides a variational bound on the model likelihood.

\paragraph{Relationship to Flow Matching.}  VFM admits FM as a special case, under certain assumptions on $u_t(x \mid x_1)$, when the variational approximation is Gaussian. 
Formally, the following holds (see \cref{th:vfm_fm_connection} in \cref{app:proofs_vfm_fm_connection} for the proof):

\begin{restatable}{theorem}{vfmfmconnection}
\label{th:vfm_fm_connection}
Assume the conditional vector field $u_t(x \mid x_1)$ is linear in $x_1$ and is of the form
\begin{align}
    u_t(x|x_1) = A_t(x) x_1 + b_t(x),
\end{align}
where $A_t(x): [0, 1] \times \mathbb{R}^D \rightarrow \mathbb{R}^D \times \mathbb{R}^D$ and $b_t(x): [0, 1] \times \mathbb{R}^D \rightarrow \mathbb{R}^D$. Moreover, assume that  $A_t(x)$ is an invertible matrix and $q_t^{\theta}(x_1 \mid x) = \mathcal{N}(x_1 \mid \mu_t^{\theta}(x), \Sigma_t(x))$,  where $\Sigma_t(x) = \frac{1}{2}( A^{\top}_t(x) A_t(x) )^{-1}$. Then, VFM reduces to flow matching.
\end{restatable}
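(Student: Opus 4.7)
The plan is to show that, under the stated assumptions, the VFM objective coincides with the conditional flow matching objective up to an additive constant that is independent of $\theta$. Since VFM is defined via the KL to the posterior probability path and CFM is known (by the results of Lipman et al.) to have the same gradients as the marginal flow matching objective, this equality suffices.

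First, I would write out the negative log density of the Gaussian variational approximation explicitly,
\begin{equation*}
-\log q_t^{\theta}(x_1 \mid x) = \tfrac{1}{2}(x_1 - \mu_t^{\theta}(x))^{\top} \Sigma_t(x)^{-1} (x_1 - \mu_t^{\theta}(x)) + c_t(x),
\end{equation*}
where $c_t(x)$ collects the log-normalizer. The key observation is that $\Sigma_t(x) = \tfrac{1}{2}(A_t^{\top}(x)A_t(x))^{-1}$ depends only on $x$ (through $A_t$) and not on $\theta$, so $c_t(x)$ is constant with respect to the parameters. Substituting $\Sigma_t^{-1}(x) = 2 A_t^{\top}(x) A_t(x)$ turns the quadratic form into a plain squared norm,
\begin{equation*}
-\log q_t^{\theta}(x_1 \mid x) = \bigl\| A_t(x)\bigl(x_1 - \mu_t^{\theta}(x)\bigr) \bigr\|_2^2 + c_t(x).
\end{equation*}

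Next, I would exploit the linearity of the conditional vector field. By \cref{th:vfm_mean_field} and the decomposition developed in \cref{sec:decomp_fm}, the approximate marginal vector field under the Gaussian variational family is $v_t^{\theta}(x) = A_t(x) \mu_t^{\theta}(x) + b_t(x)$, while by assumption $u_t(x \mid x_1) = A_t(x) x_1 + b_t(x)$. Subtracting gives the clean identity
\begin{equation*}
u_t(x \mid x_1) - v_t^{\theta}(x) = A_t(x)\bigl(x_1 - \mu_t^{\theta}(x)\bigr),
\end{equation*}
so the integrand above is exactly $\|u_t(x \mid x_1) - v_t^{\theta}(x)\|_2^2$.

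Taking the expectation under $t, x, x_1 \sim \mathrm{Uniform}(0,1)\,p_t(x, x_1)$ and recalling that $c_t(x)$ does not depend on $\theta$, I obtain
\begin{equation*}
\mathcal{L}_{\text{VFM}}(\theta) = \mathbb{E}_{t, x, x_1}\bigl[\|u_t(x \mid x_1) - v_t^{\theta}(x)\|_2^2\bigr] + \text{const} = \mathcal{L}_{\text{CFM}}(\theta) + \text{const},
\end{equation*}
which is exactly the conditional flow matching objective. Since $\mathcal{L}_{\text{CFM}}$ and $\mathcal{L}_{\text{FM}}$ have identical gradients, this establishes that VFM reduces to flow matching.

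There is no major obstacle here; the argument is essentially algebraic. The only subtlety worth stating carefully is that invertibility of $A_t(x)$ is what guarantees $\Sigma_t(x)$ is well-defined (positive-definite), so the Gaussian density is proper and the normalizing constant is indeed finite and $\theta$-independent. One should also be explicit that the requirement $\Sigma_t(x) = \tfrac{1}{2}(A_t^{\top}(x)A_t(x))^{-1}$ is precisely what makes the precision match the Gram matrix of $A_t$, yielding the collapse of the Mahalanobis norm into $\|A_t(\cdot)\|_2^2$; any other choice of covariance would leave an extra $x$-dependent reweighting that would not reproduce standard FM.
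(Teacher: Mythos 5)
Your proof is correct and follows essentially the same route as the paper's: substitute the Gaussian form with $\Sigma_t^{-1}(x) = 2A_t^{\top}(x)A_t(x)$ so the Mahalanobis term collapses to $\|A_t(x)(x_1 - \mu_t^{\theta}(x))\|_2^2$, identify this with $\|u_t(x \mid x_1) - v_t^{\theta}(x)\|_2^2$ via linearity, and absorb the $\theta$-independent log-normalizer into a constant. Your added remarks on why invertibility of $A_t(x)$ ensures a proper density and why this specific covariance choice is needed are correct refinements that the paper leaves implicit.
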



\vspace{0.5\baselineskip}



\paragraph{Relationship to Score-Based Models.} Flow matching \cite{lipman2023flow} is inspired by score-based models \cite{song2020score} and shares strong connections with them. This leads us to two observations. The first is that it should be possible to define a variational parameterization of the score function in diffusion models that is analogous to the one in VFM. The second is that we can build on existing results that show that many diffusion model objectives, including the standard flow matching objective with a linear interpolant, can be expressed as special cases of a general weighted loss function \cite{kingma2021variational,kingma2023understanding}. Here we simililarly define a bound on the log-likelihood in terms of a reweighted VFM objective.

In score-based models, the objective is to approximate the score function $\nabla_x \log p_t(x)$ with a function $s^\theta_t(x)$. A connection to VFM becomes apparent by observing that the score function can also be expressed as an expectation with respect to $p_t(x_1 \mid x)$ (see \cref{app:proofs_score} for derivation):
\begin{align}
    \label{eq:marginal_score}
    \nabla_x \log p_t(x)
    &= \int p_t(x_1 \mid x) \nabla_x \log p_t(x \mid x_1) \mathrm{d}x_1 
    = \mathbb{E}_{p_t(x_1 \mid x)} \left[ \nabla_x \log p_t(x \mid x_1) \right],
\end{align}
where $\nabla_x \log p_t(x \mid x_1)$ is the tractable conditional score function. Similarly, we can parameterize $s^\theta_t(x)$ in terms of an expectation with respect to a variational approximation $q^\theta_t(x_1 \mid x)$,
\begin{equation}
    \label{eq:vfm_score}
    s_t^{\theta}(x) := \int q_t^{\theta}(x_1 \mid x) \nabla_x \log p_t(x \mid x_1) \: \mathrm{d}x_1.
\end{equation} 
It is now clear that $s_t^\theta(x) = \nabla_x \log p_t(x)$ when $q_t^\theta(x_1 \mid x) = p_t(x_1 \mid x)$. This suggests that there exists a variational formulation of score-based models that is entirely analogous to VFM. Indeed, existing work on continuous diffusion for categorical data \cite{dieleman2022continuous} defines a parameterization of the score function of this form (see  \cref{sec:related} for a more detailed discussion). 



Following \cite{song2020score, anderson1982reverse}, we can construct stochastic generative dynamics $dx = \tilde{v}_t^\theta(x) dt + g_t dw$ to approximate the true dynamics $dx = \tilde{u}_t(x) dt + g_t dw$ (see details in \cref{app:proofs_sde}), with
\begin{align}
    \label{eq:vfm_sde}
    \tilde{u}_t(x) 
    &:=
    \mathbb{E}_{p_t(x_1|x)} 
    \left[ 
        u_t(x \mid x_1) 
        +  
        \frac{g_t^2}{2} \nabla_x \log p_t(x \mid x_1)
    \right],
    &
    \tilde{v}^\theta_t(x) 
    &:= 
    v_t^{\theta}(x) + \frac{g_t^2}{2} s_t^{\theta}(x)
    .
\end{align}
Here $g_t: [0,1] \rightarrow \mathbb{R}_{+}$ is a scalar function, and $w$ is a standard Wiener process.

This connection has two important implications. First, it shows that learning a variational approximation $q^\theta_t(x_1 \mid x)$ can be used to define both deterministic and stochastic dynamics, whereas flow matching typically considers deterministic dynamics only (as flows are viewed through the lens of ODEs). Second, it enables us to show that a reweighted version of the VFM objective provides a bound on the log-likelihood of the model.  
This result, inspired by \cite{kingma2023understanding}, provides another theoretical motivation for learning using the VFM objective. 
\begin{restatable}{theorem}{vfmbound}
\label{th:vfm_bound}
Rewrite the Variational Flow Matching objective as follows:
\begin{align}
    \mathcal{L}_{\text{VFM}}(\theta) = \mathbb{E}_{t, x} \left[ \mathcal{L}^\theta(t, x) \right]
    \quad \textrm{where} \quad
    \mathcal{L}^\theta(t, x) = - \mathbb{E}_{x_1} \left[ \log q_t^\theta(x_1 \mid x) \right].
\end{align}

Then, the following holds:
\begin{align}
    -\mathbb{E}_{x_1} \left[ \log q_1^\theta(x_1) \right] 
    \leq \mathbb{E}_{t, x} \left[ \lambda_t(x) \mathcal{L}^\theta(t, x) \right] + C,
\end{align}

where $\lambda_t(x)$ is a non-negative function and $C$ is a constant.
\end{restatable}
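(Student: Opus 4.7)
The plan is to treat the learned variational family as inducing the stochastic generative dynamics $dx = \tilde v_t^\theta(x)\, dt + g_t\, dw$ from \cref{eq:vfm_sde}, interpret $q_1^\theta$ as the time-$1$ marginal of that SDE, and then lift the bound on the data likelihood to a bound on the KL between the path measures of the true and learned SDEs. Concretely, I would first rewrite
\begin{equation*}
    -\mathbb{E}_{x_1}[\log q_1^\theta(x_1)] = H(p_{\text{data}}) + \text{KL}(p_{\text{data}} \,\|\, q_1^\theta),
\end{equation*}
absorbing $H(p_{\text{data}})$ into $C$. Since both processes share the diffusion coefficient $g_t$, the data processing inequality gives $\text{KL}(p_{\text{data}} \,\|\, q_1^\theta) \leq \text{KL}(p_0 \,\|\, q_0) + \text{KL}(\mathbb{P}^p \,\|\, \mathbb{P}^{q^\theta})$, where $\mathbb{P}^p$ and $\mathbb{P}^{q^\theta}$ denote the path measures on $[0,1]$ generated by the true and learned SDEs, respectively. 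The initial KL is also absorbed into $C$.

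Next I would invoke Girsanov's theorem to express the path-measure KL as a time integral of squared drift differences,
\begin{equation*}
    \text{KL}(\mathbb{P}^p \,\|\, \mathbb{P}^{q^\theta}) = \tfrac{1}{2}\, \mathbb{E}_{t, x \sim p_t}\!\left[\tfrac{1}{g_t^2}\, \|\tilde u_t(x) - \tilde v_t^\theta(x)\|^2\right].
\end{equation*}
Defining $f_t(x, x_1) := u_t(x \mid x_1) + \tfrac{g_t^2}{2}\nabla_x \log p_t(x \mid x_1)$, the formulas for $\tilde u_t$ and $\tilde v_t^\theta$ in \cref{eq:vfm_sde} show that the drift difference is precisely $\mathbb{E}_{p_t(x_1 \mid x)}[f_t] - \mathbb{E}_{q_t^\theta(x_1 \mid x)}[f_t]$, so the Girsanov integrand becomes a squared discrepancy between posterior-weighted averages of $f_t$ under $p_t(\cdot \mid x)$ and $q_t^\theta(\cdot \mid x)$.

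The crux is then translating this squared difference of expectations into the KL divergence $\text{KL}(p_t(\cdot \mid x) \,\|\, q_t^\theta(\cdot \mid x))$ that appears (up to an entropy constant) in $\mathcal{L}^\theta(t, x)$. Using Cauchy-Schwarz with the change-of-measure $p_t(x_1 \mid x)/q_t^\theta(x_1 \mid x)$, or equivalently the Donsker-Varadhan variational representation, one obtains an inequality of the form $\|\mathbb{E}_{p_t}[f_t] - \mathbb{E}_{q_t^\theta}[f_t]\|^2 \leq \alpha_t(x)\, \text{KL}(p_t(\cdot \mid x) \,\|\, q_t^\theta(\cdot \mid x))$, where $\alpha_t(x)$ depends only on moments of $f_t$ under $p_t(\cdot \mid x)$. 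Setting $\lambda_t(x) := \alpha_t(x)/(2 g_t^2)$, which is non-negative as required, and noting that $\mathcal{L}^\theta(t,x) = \text{KL}(p_t(\cdot \mid x) \,\|\, q_t^\theta(\cdot \mid x)) + H(p_t(\cdot \mid x))$, one folds the conditional-entropy terms into $C$ and arrives at the claimed inequality. The main obstacle is this final expectation-to-KL step: obtaining a clean, explicit $\lambda_t(x)$ requires either a uniform moment bound on $f_t$ or a local regularity assumption on the posterior probability path, and the resulting weight can be loose near $t = 1$ where $g_t^2$ and posterior variances may degenerate, so some care is needed in choosing the Girsanov-compatible reference measure.
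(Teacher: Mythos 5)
Your overall architecture coincides with the paper's: both start from $-\mathbb{E}_{x_1}[\log q_1^\theta(x_1)] = \mathrm{H}(p_1) + \mathrm{KL}(p_1\|q_1^\theta)$, bound the marginal KL by a time integral of $\frac{1}{2g_t^2}\|\tilde u_t(x)-\tilde v_t^\theta(x)\|^2$ for the two SDEs sharing diffusion coefficient $g_t$ (the paper invokes Lemma~2.22 of Albergo et al., which packages exactly your data-processing-plus-Girsanov argument), and then write the drift difference as $\int \big(p_t(x_1\mid x) - q_t^\theta(x_1\mid x)\big)\,\tilde u_t(x\mid x_1)\,\mathrm{d}x_1$. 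Up to that point your proposal is the paper's proof.

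The gap is in the step you yourself flag as the crux. Cauchy--Schwarz with the change of measure $p/q$ controls $\|\mathbb{E}_p[f]-\mathbb{E}_q[f]\|^2$ by $\chi^2(p\|q)\cdot\mathbb{E}_q[\|f\|^2]$, and the chi-squared divergence is \emph{not} upper-bounded by the KL divergence (the inequality runs the other way), so this route does not produce a bound of the form $\alpha_t(x)\,\mathrm{KL}\big(p_t(\cdot\mid x)\,\|\,q_t^\theta(\cdot\mid x)\big)$. Donsker--Varadhan would work, but it requires exponential-moment (sub-Gaussian) control of $\tilde u_t(x\mid\cdot)$, not merely ``moments under $p_t(\cdot\mid x)$'' as you state. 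The paper avoids both issues with a cruder but valid move: pull out $l_t(x) := \sup_{x_1}\|\tilde u_t(x\mid x_1)\|$ so the drift difference is bounded by $l_t(x)\int |p_t(x_1\mid x) - q_t^\theta(x_1\mid x)|\,\mathrm{d}x_1$, i.e.\ a total-variation distance, and then apply Pinsker's inequality $\big(\int|p-q|\big)^2 \le 2\,\mathrm{KL}(p\|q)$. This yields a non-negative weight $\lambda_t(x)$ built from $l_t(x)$ and $g_t^{-2}$ (the theorem statement is agnostic about its explicit form, so a sup-norm rather than a moment is perfectly acceptable), and the cross-entropy/entropy decomposition of the posterior KL then folds all entropy terms into $C$ exactly as you describe. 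Replacing your Cauchy--Schwarz/Donsker--Varadhan step with this sup-plus-Pinsker step closes the gap and recovers the paper's argument.
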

We provide a proof in \cref{app:proofs_vfm_bound}. We further note that
that applying the same linearity condition that we discussed in \cref{sec:decomp_fm} to the conditional score function maintains all the same connections with score-based models.

\section{Related Work}
\label{sec:related}

\paragraph{Diffusion Models for Discrete Data.} 

Several approaches to diffusion models have been developed for graph generation. In \cite{vignac2022digress}, the authors define a Markov process that progressively edits graphs by adding or removing edges and altering node or edge categories and is trained using a graph transformer network, that reverses this process to predict the original graph structure from its noisy version. This approach breaks down the complex task of graph distribution learning into simpler node and edge classification tasks. Moreover, \cite{jo2022scorebased} proposes a score-based generative model for graph generation using a system of stochastic differential equations (SDEs). The model effectively captures the complex dependencies between graph nodes and edges by diffusing both node features and adjacency matrices through continuous-time processes. These are non-autoregressive graph generation approaches that perform on par with autoregressive ones, such as in \cite{liu2018constrained, liao2019efficient, mercado2021graph}. Other non-autoregressive approaches worth mentioning are \cite{lippe2020categorical, madhawa2019graphnvp, luo2021graphdf}.

There is also work on diffusion-based models for discrete data in the form of text and other sequential data \cite{austin2021structured, luo2021graphdf, yang2023diffsound, inoue2023layoutdm}.  Though not explicitly formulated in terms of a variational
perspective, the work on continuous diffusion for categorical data \cite{dieleman2022continuous} arrives at a an approach that is closely related to that of CatFlow. This work defines a diffusion in the embedding space of a transformer-based language model. It defines an approximation of the score function as an expected value of a conditional score function as in \cref{eq:vfm_score}, which leads to an expression for the learned score function in terms of a mean embedding, analogous to the one we obtain in \cref{eq:catflow_vector_field}. Where this approach differs from CatFlow, other than in that it defines a flow in the embedding space of language models, is in how the objective is defined. The authors also minimize a cross-entropy loss, but employ a time-warped objective, similar to the general weighted objective proposed in \cite{kingma2021variational}, where the warping is optimized to ensure that the entropy of predictions decreases linearly when moving from noise to data in uniform time.

\paragraph{Flow-based methods for Discrete Data.} Recently, two flow-based methods for discrete generative modeling have been proposed, which differ both in terms of technical approach and intended use case from the work that we present here.\footnote{Flow matching and diffusion models have also been proposed for geometric graph generation, e.g. in \cite{klein2023equivariant, song2023equivariant} and \cite{hoogeboom2022equivariant, trippe2022diffusion} respectively, but since these approaches are continuous (as they generate coordinates based on some conformer) they consider a fundamentally different task than the ones we consider here.}

In \cite{stark2024dirichlet}, a Dirichlet flow framework for DNA sequence design is introduced, utilizing a transport problem defined over the probability simplex, similar to diffusion on simplices proposed in \cite{richemond2022categorical}. This approach differs from CatFlow in that it represents the conditional probability path $p_t(x \mid x_1)$ using a Dirichlet distribution. This implies that points $x$ are constrained to the simplex, which is not the case for CatFlow. Dirichlet Flows have not been evaluated on graph generation, but we did carry out preliminary experiments based on the released source code. We compare to this approach in our experiments.

In \cite{campbell2024generative}, Discrete Flow Models (DFMs) are introduced. DFMs use Continuous-Time Markov Chains to enable flexible and dynamic sampling in multimodal generative modeling of both continuous and discrete data. Though sharing a goal, this approach differs significantly from CatFlow as in the end the resulting model does not learn a CNF, but rather generation through sequential sampling from a time-dependent categorical distribution. As in the case of Dirichlet flows, no evaluation on graph generation was performed.

 The switch to the variational perspective is inspired by \cite{vignac2022digress}, showing significant improvement through viewing the dynamics as a classification task over end points. However, CatFlow is still a continuous model, and integrates -- rather than iteratively samples -- during generation.

\begin{table}
\small
\centering
\caption{Results abstract graph generation.}
 \begin{tabular}{lcccccc}
	\toprule
	& \multicolumn{3}{c}{\textbf{Ego-small}} & \multicolumn{3}{c}{\textbf{Community-small}}  \\
	\cmidrule(lr){2-4} \cmidrule(lr){5-7}
	\cmidrule(lr){2-4} \cmidrule(lr){5-7} 
	& Degree $\downarrow$ & Clustering $\downarrow$ & Orbit $\downarrow$ & Degree $\downarrow$ & Clustering $\downarrow$ & Orbit $\downarrow$ \\
	\midrule
	GraphVAE \cite{simonovsky2018graphvae} &0.130&0.170&0.050 &0.350&0.980&0.540   \\ 
	GNF \cite{liu2019graph} & 0.030&0.100&0.001 &0.200&0.200&0.110  \\
	EDP-GNN  \cite{niu2020permutation}  & 0.052&0.093&0.007 &0.053&0.144&0.026  \\
	GDSS \cite{jo2022scorebased} & 0.021&\textbf{0.024}&\textbf{0.007} &0.045&\textbf{0.086}&\textbf{0.007}  \\
	\midrule
	\textbf{CatFlow} &\textbf{ 0.013} & \textbf{0.024} & 0.008    & \textbf{0.018 }& \textbf{0.086} & \textbf{0.007}    \\
	\bottomrule
\end{tabular}

\label{tab:gen_res}
\end{table}

\section{Experiments}
\label{sec:experiments_results}

We evaluate CatFlow in three sets of experiments. First, we consider an abstract graph generation task proposed in \cite{martinkus2022spectre}, where the goal of this task is to evaluate if CatFlow is able to capture the topological properties of graphs. Second, we consider two common molecular benchmarks, QM9 \cite{ramakrishnan2014quantum} and ZINC250k \cite{irwin2012zinc}, consisting of small and (relatively) large molecules respectively. This task is chosen to see if CatFlow can learn semantic information in graph generation, such as molecular properties. Finally, we perform an ablation comparing CatFlow to standard flow matching, specifically in terms of generalization. The experimental setup and model choices are provided in \cref{app:exp_setup}.

Note that we treat graphs as purely categorical/discrete objects and do not consider `geometric' graphs that are embedded in e.g. Euclidean space. Specifically, for some graph with $K_v$ node classes and $K_e$ edge classes, we process the graph as a fully-connected graph, where each node is treated as a categorical variable of one of $K_v$ classes and each edge of $K_e + 1$ classes, where the extra class corresponds with being absent.

\begin{table}[t]
\small
	\centering
	\caption{Results molecular generation.}

	\begin{tabular}{lcccccc}
		\toprule
		&\multicolumn{3}{c}{\textbf{QM9}} & \multicolumn{3}{c}{\textbf{ZINC250k}} \\
		\cmidrule(lr){2-4} \cmidrule(lr){5-7}
		\cmidrule(lr){2-4} \cmidrule(lr){5-7}
		& Valid $\uparrow$ & Unique $\uparrow$ & FCD $\downarrow$ & Valid $\uparrow$& Unique $\uparrow$  & FCD $\downarrow$ \\
		\midrule
		MoFlow \cite{zang2020moflow} & 91.36 & 98.65  & ~~4.467 & 63.11 & 99.99 &  20.931 \\
		EDP-GNN \cite{niu2020permutation} & 47.52 & 99.25  & ~~2.680 & 82.97 & 99.79 & 16.737 \\
		GraphEBM \cite{liu2021graphebm} & ~~8.22 & 97.90 & ~~6.143 & ~~5.29 & 98.79 & 35.471 \\
		GDSS  \cite{jo2022scorebased} & 95.72 & 98.46  & ~~2.900 & 97.01 & 99.64  & 14.656 \\
		Digress \cite{vignac2022digress} & 99.00 & 96.20  & - & - & -  & - \\
  Flow Matching \cite{lipman2023flow} & 94.10& 98.20 & ~~5.155 & 94.01& 96.68  & 18.764 \\
  Dirichlet FM \cite{stark2024dirichlet} & 99.10 & 98.15 & ~~0.888 & 97.52 & 99.20  & 14.222 \\
  		\midrule
		\textbf{CatFlow} & \textbf{99.81} & \textbf{99.95} & ~~\textbf{0.441} &\textbf{ 99.21} & \textbf{100.00}  & \textbf{13.211} \\
		\bottomrule 
	\end{tabular}
	\label{tab:res_mols}
\end{table}

\begin{figure}[!b]
	\centering
	\includegraphics[width=0.90\textwidth]{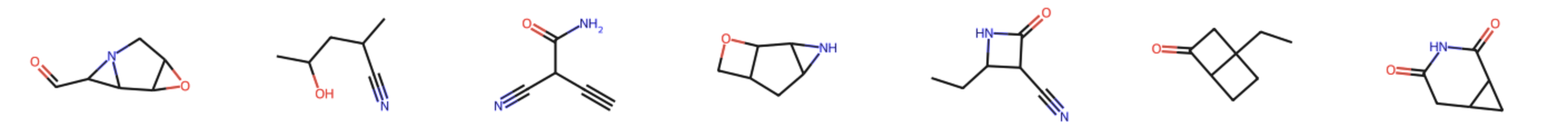}
	\includegraphics[width=0.90\textwidth]{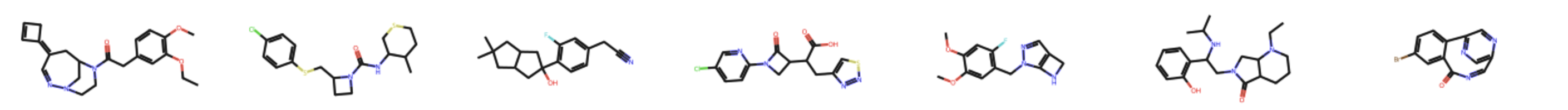}
	\caption{CatFlow samples of QM9 (top) and ZINC250k (bottom).}
	\label{fig:qm9-zinc-samples}
\end{figure}

\subsection{Abstract graph generation}

We first evaluate CatFlow on an abstract graph generation task, including synthetic and real-world graphs. We consider 1) Ego-small (200 graphs), consisting of small ego graphs drawn from a larger Citeseer network dataset \cite{sen2008collective}, 2) Community-small (100 graphs), consisting of randomly generated community graphs, 3) Enzymes (587 graphs), consisting of protein graphs representing tertiary structures of the enzymes from \cite{schomburg2004brenda}, and 4) Grid (100 graphs), consisting of 2D grid graphs. We follow the standard experimental setup popularized by \cite{you2018graphrnn} and hence report the maximum mean discrepancy (MMD) to compare the distributions of degree, clustering coefficient, and the number of occurrences of orbits with 4 nodes between generated graphs and a test set. Following \cite{jo2022scorebased}, we also use the Gaussian Earth Mover’s Distance kernel to compute the MMDs instead of the total variation. 

The results of the Ego-small and Community-small tasks are summarized in \cref{tab:gen_res}, and additional results (and error bars) are provided in \cref{app:det_res}. The results indicate that CatFlow is able to capture topological properties of graphs, and performs well on abstract graph generation.

\vspace{0.5\baselineskip}

\subsection{Molecular Generation: QM9 \& ZINC250k}

Molecular generation entails designing novel molecules with specific properties, a complex task hindered by the vast chemical space and long-range dependencies in molecular structures. 
We evaluate CatFlow on two popular molecular generation benchmarks: QM9 and ZINC250k \cite{ramakrishnan2014quantum, irwin2012zinc}.

We follow the standard setup -- e.g. as in \cite{shi2020graphaf, luo2021graphdf, vignac2022digress, jo2022scorebased} -- of kekulizing the molecules using RDKit \cite{Landrum2016RDKit2016_09_4} and removing the hydrogen atoms. We sample 10,000 molecules and evaluate them on validity, uniqueness, and Fréchet ChemNet Distance (FCD) -- evaluating the distance between data and generated molecules using the activations of ChemNet \cite{preuer2018frechet}. Here, validity is computed without valency correction or edge resampling, hence following \cite{zang2020moflow} rather than \cite{shi2020graphaf, luo2021graphdf}, as is
more reasonable due to the existence of formal charges in the data itself. We do not report novelty for QM9 and ZINC250k, as QM9 is an exhaustive list of all small molecules under some chemical constraint and all models obtain (close to) 100\% novelty on ZINC250k.\footnote{CatfFlow obtains 49\% novelty on QM9.}  

The results are summarized in \cref{tab:res_mols} and samples from the model are shown in \cref{fig:qm9-zinc-samples}.  CatFlow obtains state-of-the-art results on both QM9 and ZINC250k, virtually obtaining perfect performance on both datasets. It is worth noting that CatFlow also converges faster than flow matching and is not computationally more expensive than any of the baselines either during training or generation.

\subsection{CatFlow Ablations}
\label{ssec:ablation}

\begin{figure}[t]
    \centering
    \begin{subfigure}[b]{0.32\textwidth}
        \includegraphics[width=\textwidth]{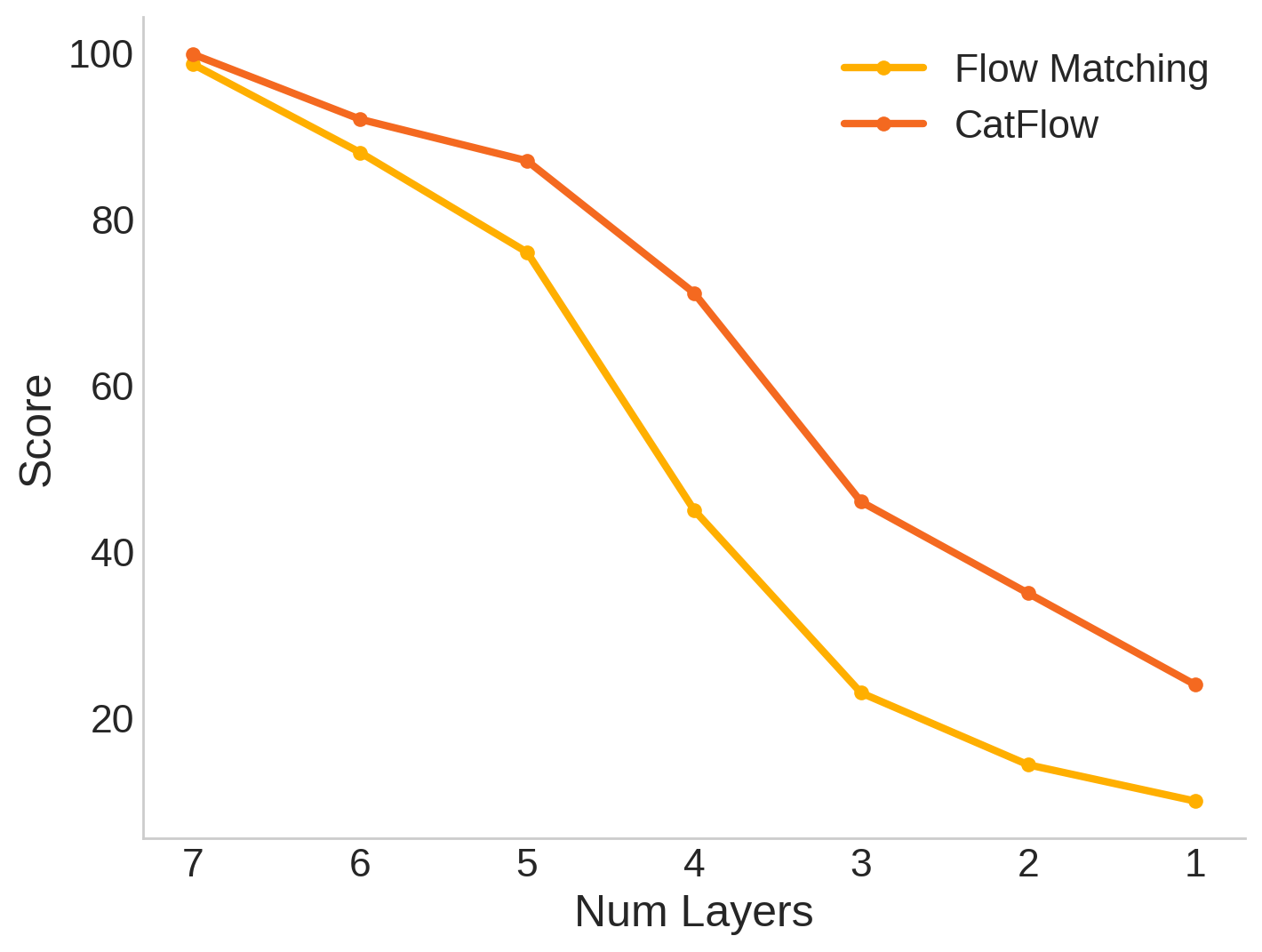}
        \caption{Score $100\%$ of data.}
    \end{subfigure}
    \hfill
    \begin{subfigure}[b]{0.32\textwidth}
        \includegraphics[width=\textwidth]{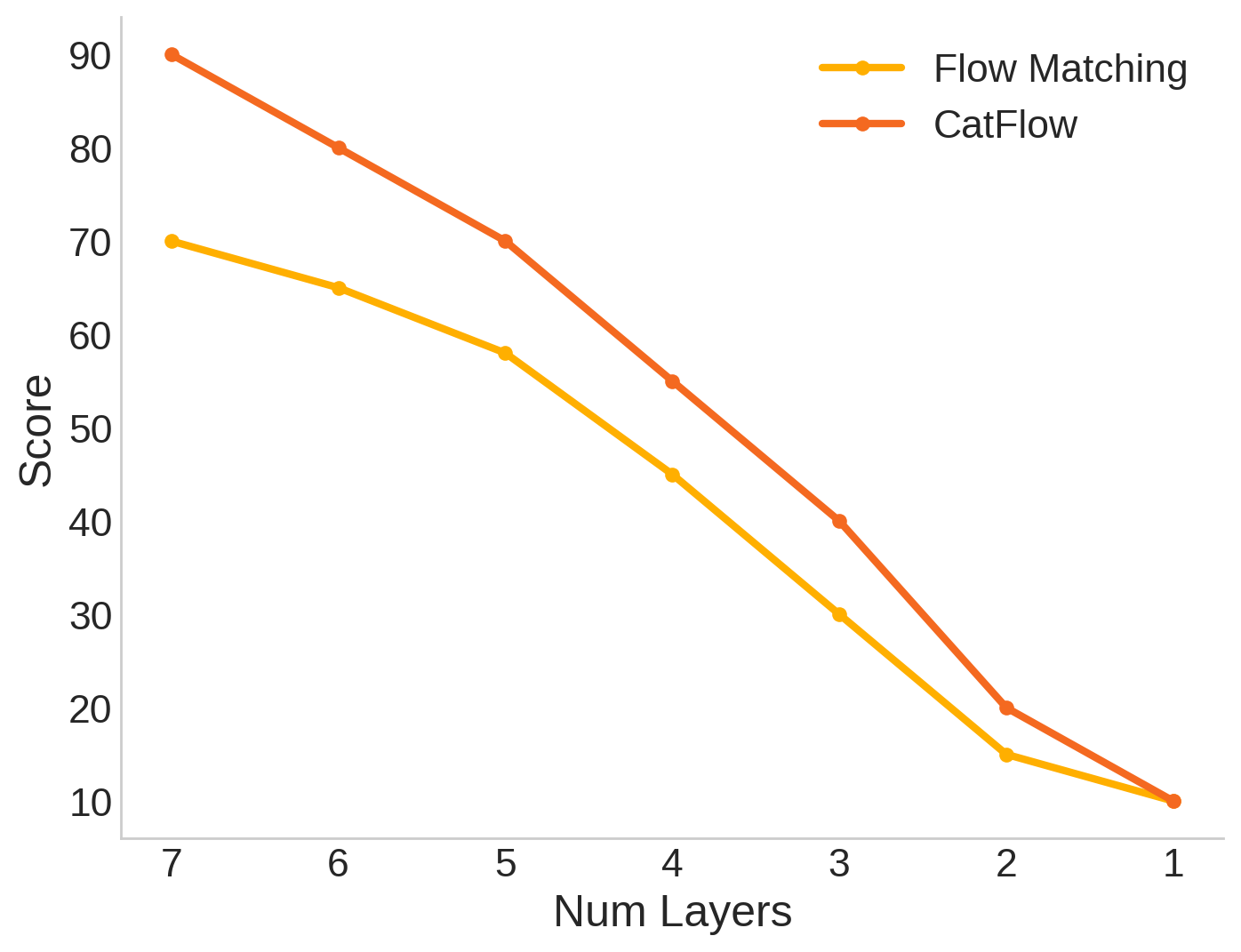}
        \caption{Score $20\%$ of data.}
    \end{subfigure}
    \hfill
    \begin{subfigure}[b]{0.32\textwidth}
        \includegraphics[width=\textwidth]{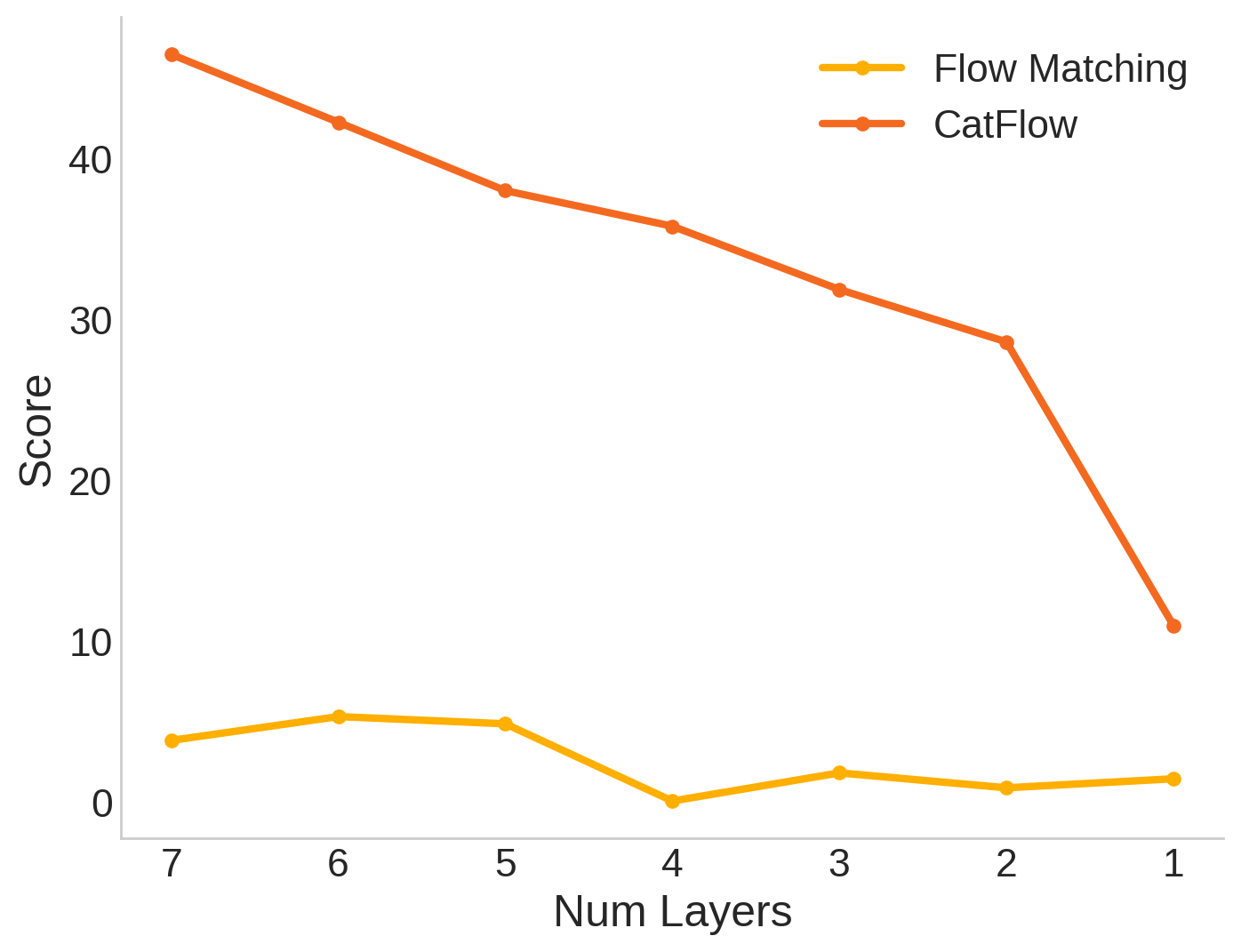}
        \caption{Score $5\%$ of data.}
    \end{subfigure}
    \caption{Ablation results. We compare standard flow matching and CatFlow. We visualize performance degradation in terms of a score, which is the percentage of molecules that is valid and unique, for a varying number of layers and percentage of the training data.}
    \label{fig:comparison_plots}
\end{figure}

To understand the difference in performance between CatFlow and a  standard flow matching formulation we perform ablations. 
we focus on generalization capabilities, and as such consider ablations that the number of parameters in the model and the amount of training data. 

In \cref{fig:comparison_plots} we report a score, which is the percentage of generated molecules that is valid \textit{and} unique. CatFlow not only outperforms regular flow matching in the large model and full data setting, but is also significantly more robust to a decrease in model-size and data. Moreover, we observe significantly faster convergence (curves not shown). We hypothesize this is a consequence of the optimization procedure not exploring `irrelevant' paths that do not point towards the probability simplex.



\section{Conclusion}
\label{sec:concl}

We have introduced a variational reformulation of flow matching. This formulation in turn informed the design of a simple flow matching method for categorical data, which achieves strong performance on graph generation tasks. Variational flow is very general and opens up several lines of inquiry. We see immediate opportunities to apply CatFlow to other discrete data types, including text, source code, and more broadly to the modeling of mixed discrete-continuous data modalities. Additionally, the connections to score-based models that we identify in this paper, suggest a path  towards learning both deterministic and stochastic dynamics. 


\paragraph{Limitations.} While the VFM formulation that we identify in this paper has potential in terms of its generality, we have as yet only considered its application to the specific task of categorical graph generation. We leave other use cases of VFM to future work. A limitation of CatFlow, which is shared with related  approaches to graph generation, is that reasoning about the set of possible edges has a cost that is quadratic in the number of nodes. As a result CatFlow does not scale well to e.g. large proteins of $10^4$ or more atoms.

\paragraph{Ethics Statement.} Graph generation in general, and molecular generation specifically, holds great promise for advancing drug discovery and personalized medicine. However, this technology also poses ethical concerns, such as the potential for misuse in creating harmful substances. In terms of technology readiness, this work is not yet at a level where we foresee direct benefits or risks.

\paragraph{Acknowledgements.} This project was supported by the Bosch Center for Artificial Intelligence.


\bibliographystyle{plain}

\bibliography{bibliography.bib}

\appendix

\newpage

\newpage

\section{Proofs and Derivations}
\label{app:proofs}

\subsection{Derivation of the Variational Flow Matching Objective}
\label{app:proofs_vfm}

We derive \cref{eq:nll_der} that states the equivalence of the optimization of the VFM objective and minimization of the KL divergence between the true endpoint distribution $p_t(x_1 \mid x)$ and the variational approximation $q^{\theta}_t(x_1 \mid x)$. Note that
\begin{align}
    \label{eq:app_nll_der}
    \min_\theta~
    \mathbb{E}_{t,x}
    \left[
        \text{KL}\big(p_t(x_1 \mid x) ~||~ q^{\theta}_t(x_1 \mid x) \big)   
    \right]
    =
    \max_\theta~
    \mathbb{E}_{t, x, x_1}
    \left[\log q_t^\theta(x_1 \mid x) \right],
\end{align}
where $t \sim \text{Uniform}(0,1)$, $x \sim p_t(x)$ and  $x_1 \sim p_\text{data}(x_1)$.

First, we rewrite the KL divergence as combination of entropy and cross-entropy $\text{KL}\left(p ~||~q\right)= \text{H}(p, q) - \text{H}(p)$:
\begin{align}
    \mathbb{E}_{t,x}
    \left[
        \text{KL}\big(p_t(x_1 \mid x) ~||~ q^{\theta}_t(x_1 \mid x) \big)   
    \right]
    =
    & \mathbb{E}_{t,x} \left[ \text{H}\big( p_t(x_1 \mid x), q^{\theta}_t(x_1 \mid x) \big) \right] - \\
    & \mathbb{E}_{t,x} \left[ \text{H}\big( p_t(x_1 \mid x) \big) \right].
\end{align}
We observe that the entropy term does not depend on the parameters $\theta$. Consequently we disregard it when optimising the variational distribution $q^{\theta}_t(x_1 \mid x)$.

Second, we rewrite the cross-entropy term:
\begin{align}
    \mathbb{E}_{t,x} \left[ \text{H}\big( p_t(x_1 \mid x), q^{\theta}_t(x_1 \mid x) \big) \right] =
    - \mathbb{E}_{t, x, x_1} \left[ \log q_t^\theta(x_1 \mid x) \right].
\end{align}

Therefore, the second part of \cref{eq:app_nll_der} corresponds to negative cross-entropy and minimisation corresponds to maximisation of negative cross-entropy.

\subsection{Flow Matching as a Special Case of Variational Flow Matching}
\label{app:proofs_vfm_fm_connection}

\vfmfmconnection*

\begin{proof}
Let us substitute the assumed form of  $q_t^{\theta}(x_1 \mid x)$ into the VFM objective:
\begin{align}
    \mathcal{L}_{\text{VFM}}(\theta) 
    & = - \mathbb{E}_{t, x, x_1} \left[ \log q_t^{\theta}(x_1 \mid x) \right] \\
    & = - \mathbb{E}_{t, x, x_1} \left[ \log \left( 
        \left( 2 \pi \right)^{-D/2} \left| \Sigma_t(x) \right|^{-1/2}
        \exp \left( -\left\| A_t(x) \left( x_1 - \mu_t^{\theta}(x) \right)  \right\|_2^2 \right)
    \right) \right] \\
    & = \mathbb{E}_{t, x, x_1} \left[ 
        \left\| A_t(x) \left( x_1 - \mu_t^{\theta}(x) \right)  \right\|_2^2
    \right] + 
    \frac{1}{2}\mathbb{E}_{t, x, x_1} \Big[ 
        D \log \left( 2 \pi \right) +
        \log \left| \Sigma_t(x) \right|
    \Big]\\
    & = \mathbb{E}_{t, x, x_1} \left[ 
        \left\| 
            \left( A_t(x) x_1 + b_t(x) \right) - 
            \left( A_t(x) \mu_t^{\theta}(x) + b_t(x) \right)
        \right\|_2^2
    \right] + C\\
    & = \mathbb{E}_{t, x, x_1} \left[ 
        \left\| u_t(x \mid x_1) - v_t^\theta(x) \right\|_2^2
    \right] + C,
\end{align}
which is what we wanted to show.
\end{proof}



\subsection{Decomposition of the Flow}
\label{app:proofs_mean_field}

\vfmmeanfield*

\begin{proof}
Applying the linearity condition,  we rewrite the conditional vector field $u_t(x|x_1)$ as such:
\begin{align}
    u_t(x \mid x_1) = A_t(x) x_1 + b_t(x),
\end{align}
where $A_t(x): [0, 1] \times \mathbb{R}^D \rightarrow \mathbb{R}^D \times \mathbb{R}^D$ and $b_t(x): [0, 1] \times \mathbb{R}^D \rightarrow \mathbb{R}^D$. Then, we substitute it into equation:
\begin{align}
    \mathbb{E}_{p_t(x_1 \mid x)}\left[u_t(x \mid x_1)\right]
    & = \mathbb{E}_{p_t(x_1 \mid x)}\left[ A_t(x) x_1 + b_t(x) \right] \\
    & = A_t(x) \mathbb{E}_{p_t(x_1 \mid x)}\left[ x_1 \right] + b_t(x).
    \label{eq:app_vfm_a_b_decomposed}
\end{align}

Then, we know, that for any distribution $r$ such that the marginal distributions coincide with those of $p$ the following holds:
\begin{align}
    \mathbb{E}_{r}[x] = \mathbb{E}_p[x].
\end{align}

Applying this fact to \cref{eq:app_vfm_a_b_decomposed} we obtain:
\begin{align}
    \mathbb{E}_{p_t(x_1 \mid x)}\left[u_t(x \mid x_1)\right] 
    & = A_t(x) \mathbb{E}_{r_t(x_1 \mid x)}\left[ x_1 \right] + b_t(x) \\
    & = \mathbb{E}_{r_t(x_1 \mid x)}\left[ A_t(x) x_1 + b_t(x) \right] \\
    & = \mathbb{E}_{r_t(x_1 \mid x)}\left[u_t(x \mid x_1)\right],
\end{align}
which is what we wanted to show.\end{proof}

\subsection{CatFlow}
\label{app:catflow}

Let $\mathcal{G} = (\mathcal{V}, \mathcal{E})$ be a graph with node and edge features given by $\mathbf{H}_n \in \mathbb{R}^{|\mathcal{V}| \times d_n}$ and $\mathbf{H}_e \in \mathbb{R}^{|\mathcal{V}| \times |\mathcal{V}| \times d_e}$ respectively, and let $x$ denote the graphs and its features. Moreover, let $\pi \in S_{|\mathcal{V}|}$ be a permutation and $\mathbf{P}$ its associated permutation matrix, such that the action of the group is defined as
 \begin{itemize}
     \item $\pi \cdot \mathbf{H}_n := \mathbf{P} \mathbf{H}_n$,
     \item $\pi \cdot \mathbf{H}_e := \mathbf{P} \mathbf{H}_e \mathbf{P}^{\top}$.
 \end{itemize}
To simplify notation, we will simply write $\pi \cdot x$ to denote the above operation. 

\begin{lemma}
    If $\mu_t(x)$ is permutation equivariant w.r.t $S_{|\mathcal{V}|}$, then so is $v_t$.
\end{lemma}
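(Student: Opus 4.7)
The plan is to write out $v_t$ explicitly in terms of $\theta_t$ and $x$, and then show that permutation equivariance propagates through the closed-form expression established earlier in the paper. Recall from the CatFlow derivation that the marginal vector field can be written componentwise as $v_t^\theta(x) = (\theta_t(x) - x)/(1-t)$, where $\theta_t(x)$ collects the first moments of the variational categorical distributions on a per-component basis (i.e.\ one probability vector per node feature and one per edge feature). So structurally, both $\theta_t(x)$ and $x$ live in the same representation space on which the permutation group acts: $\mathbf{P}$ on the node slots and $\mathbf{P}(\cdot)\mathbf{P}^\top$ on the edge slots.

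First, I would spell out the action explicitly on the tuple $v_t(x) = (v_t^{\text{node}}(x), v_t^{\text{edge}}(x))$ induced by splitting the $d$-index into node components and edge components. Since $v_t$ is defined component-by-component as a scalar multiple of a difference, I can write
\begin{equation}
    v_t(x) = \frac{1}{1-t}\bigl(\theta_t(x) - x\bigr).
\end{equation}
Next, I would apply $\pi \in S_{|\mathcal{V}|}$ to the argument and unfold:
\begin{equation}
    v_t(\pi \cdot x) = \frac{1}{1-t}\bigl(\theta_t(\pi \cdot x) - \pi \cdot x\bigr).
\end{equation}
By the hypothesis that $\theta_t$ is permutation equivariant, $\theta_t(\pi \cdot x) = \pi \cdot \theta_t(x)$.

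The final step is to use the fact that the action $\pi \cdot (\cdot)$ is linear on both the node-feature space (it is left-multiplication by $\mathbf{P}$) and the edge-feature space (it is the conjugation $\mathbf{P}(\cdot)\mathbf{P}^\top$). Both are $\mathbb{R}$-linear maps, so they commute with scalar multiplication by $1/(1-t)$ and distribute over the difference. This gives
\begin{equation}
    v_t(\pi \cdot x) = \frac{1}{1-t}\bigl(\pi \cdot \theta_t(x) - \pi \cdot x\bigr) = \pi \cdot \frac{1}{1-t}\bigl(\theta_t(x) - x\bigr) = \pi \cdot v_t(x),
\end{equation}
which is exactly permutation equivariance of $v_t$.

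I do not anticipate a real obstacle here, as the whole statement is essentially a check that a linear combination of equivariant maps is equivariant. The only mild subtlety worth flagging in the write-up is making sure the reader sees that the node and edge pieces of $v_t$ transform under the appropriate (different) linear representations of $S_{|\mathcal{V}|}$, so that ``$\pi \cdot (\cdot)$'' means $\mathbf{P}(\cdot)$ on node components and $\mathbf{P}(\cdot)\mathbf{P}^\top$ on edge components, and that both are linear so the factoring out of $\pi$ in the last display is legitimate.
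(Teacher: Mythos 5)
Your proof is correct and follows essentially the same route as the paper's: write $v_t(\pi \cdot x) = (\theta_t(\pi\cdot x) - \pi\cdot x)/(1-t)$, invoke equivariance of $\theta_t$, and factor $\pi$ out of the difference using linearity of the group action. Your explicit remark that the node and edge components transform under different (but both linear) representations is a slightly more careful articulation of the step the paper summarizes as ``basic properties of linear operators,'' but the argument is the same.
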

\begin{proof}

 Note that
    $$v_t(\pi \cdot x) = \frac{\mu_t(\pi \cdot x) - \pi \cdot x}{1-t} = \frac{\pi \cdot \mu_t (x) - \pi \cdot x }{1-t},$$
    where the last step follows from permutation equivariance. Moreover, since $\pi$ acts on $x$ through permutation matrices, we can leverage the distributive property of linear operators, i.e. we conclude that
    $$\frac{\pi \cdot \mu_t (x) - \pi \cdot x }{1-t}= \pi \cdot \frac{(\mu_t(x) - x)}{1-t} = \pi \cdot v_t(x),$$
     finishing the proof.\end{proof}

\begin{theorem}
    Let $p_0$ be an exchangeable distribution -- e.g. a standard normal distribution -- and $\mu_t(x)$ be permutation equivariant. Then, all permutations of graphs are generated with equal probability.
\end{theorem}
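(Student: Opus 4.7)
The plan is to combine the permutation equivariance of the vector field $v_t$ (established by the preceding lemma) with the change-of-variables formula for the CNF, exploiting the fact that the initial distribution $p_0$ is exchangeable. The crucial intermediate fact I will need is that equivariance of $v_t$ lifts to equivariance of the induced flow $\varphi_t$, i.e.\ $\varphi_t(\pi \cdot x) = \pi \cdot \varphi_t(x)$ for every permutation $\pi \in S_{|\mathcal{V}|}$ and every $t \in [0,1]$.

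\textbf{Step 1: Equivariance of the flow.} I would argue by uniqueness of solutions to the ODE $\tfrac{d}{dt}\varphi_t(x) = v_t(\varphi_t(x))$, $\varphi_0(x)=x$. Fix $x$ and $\pi$, and define two curves $\gamma_t := \varphi_t(\pi \cdot x)$ and $\tilde{\gamma}_t := \pi \cdot \varphi_t(x)$. Both satisfy $\gamma_0 = \tilde{\gamma}_0 = \pi \cdot x$. Differentiating, $\dot{\gamma}_t = v_t(\gamma_t)$ by definition, while $\dot{\tilde{\gamma}}_t = \pi \cdot v_t(\varphi_t(x)) = v_t(\pi \cdot \varphi_t(x)) = v_t(\tilde{\gamma}_t)$, using that the permutation action is linear (so it commutes with $\tfrac{d}{dt}$) and that $v_t$ is equivariant by the lemma. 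By ODE uniqueness the two curves agree, giving $\varphi_t \circ \pi = \pi \circ \varphi_t$, and therefore $\varphi_t^{-1} \circ \pi = \pi \circ \varphi_t^{-1}$ as well.

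\textbf{Step 2: Change of variables and invariance of the Jacobian determinant.} Using $p_t(x) = p_0(\varphi_t^{-1}(x))\,|\det J_{\varphi_t^{-1}}(x)|$, I evaluate at $\pi \cdot x$. For the density factor, Step 1 gives $\varphi_t^{-1}(\pi \cdot x) = \pi \cdot \varphi_t^{-1}(x)$, and then exchangeability of $p_0$ yields $p_0(\pi \cdot \varphi_t^{-1}(x)) = p_0(\varphi_t^{-1}(x))$. For the Jacobian factor, differentiating the identity $\varphi_t^{-1}(\pi \cdot x) = \pi \cdot \varphi_t^{-1}(x)$ in $x$ gives $J_{\varphi_t^{-1}}(\pi \cdot x)\,P_\pi = P_\pi\,J_{\varphi_t^{-1}}(x)$, where $P_\pi$ is the (orthogonal) permutation matrix representing the joint node/edge action. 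Hence $J_{\varphi_t^{-1}}(\pi \cdot x) = P_\pi\,J_{\varphi_t^{-1}}(x)\,P_\pi^{-1}$, and since $|\det P_\pi| = 1$, the absolute determinant is unchanged.

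\textbf{Step 3: Conclusion.} Combining, $p_t(\pi \cdot x) = p_t(x)$ for every $t$ and every $\pi$, so in particular $p_1$ is exchangeable; every permutation of a given graph is generated with equal probability.

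\textbf{Main obstacle.} The only genuinely delicate point is Step 1, where I need to justify that the action of $\pi$ commutes with $v_t$ along the trajectory (not just pointwise) and invoke uniqueness. This is standard once one notes that the permutation action is a fixed linear isomorphism, so it commutes with $\tfrac{d}{dt}$ and preserves the Lipschitz constant required for uniqueness; still, this is the step that ties the algebraic equivariance hypothesis to the analytic conclusion about trajectories, and hence deserves careful wording.
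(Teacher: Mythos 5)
Your proposal is correct and follows essentially the same route as the paper: equivariance of $\theta_t$ gives equivariance of $v_t$ (the lemma), which lifts to equivariance of the flow map, and exchangeability of $p_0$ is then pushed forward to $p_1$. You are in fact more careful than the paper at the two places it is terse — justifying flow equivariance via ODE uniqueness rather than by permuting inside the integral, and making the pushforward step explicit through the change-of-variables formula with the $|\det P_\pi|=1$ observation.
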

\begin{proof}
    By result 1, we know that $\mu_t(x)$ being permutation equivariant implies that $v_t(x)$ is permutation equivariant. Moreover, if we let $\Gamma(x) := x + \int_0^1 v_t(x_t) dt$, the for all $\pi \in S_n$ we have that
    $$\Gamma (\pi \cdot x) = \pi \cdot x + \int_0^1 v_t(\pi \cdot x_t) dt = \pi \cdot x + \int_0^1 \pi \cdot v_t(x_t) dt = \pi \cdot \Gamma(x),$$
    where again the last step follows by basic properties of linear operators.
    Therefore, since $p_0$ assigns equal density of all permutations of $x$, the resulting distribution $p_1$ preservers this property, which is what we wanted to show.\end{proof}

\subsection{Derivation of the Score Function}
\label{app:proofs_score}

In this subsection we want to derive the equation that allows us to express the score function $\nabla_x \log p_t(x)$ in terms of the posterior probability path $p_t(x \mid x_1)$. Notice that
\begin{align}
    \nabla_x \log p_t(x)
    & = \frac{1}{p_t(x)} \nabla_x p_t(x) \\
    & = \frac{1}{p_t(x)} \nabla_x \int p_t(x \mid x_1) p(x_1) d x_1 \\
    & = \frac{1}{p_t(x)} \int p(x_1) \nabla_x p_t(x \mid x_1) d x_1 \\
    & = \int \frac{p_t(x \mid x_1) p(x_1)}{p_t(x)} \nabla_x \log p_t(x \mid x_1) d x_1 \\
    & = \int p_t(x_1 \mid x) \nabla_x \log p_t(x \mid x_1) dx_1 \\
    & = \mathbb{E}_{p_t(x_1 \mid x)} \left[ \nabla_x \log p_t(x \mid x_1) \right].
\end{align}

\subsection{Stochastic Dynamics with Variational Flow Matching}
\label{app:proofs_sde}

In this subsection, we discuss how the VFM framework can be applied to construct stochastic generative dynamics and how it relates to score-based models.

First, let us consider the marginal vector field $u_t(x)$. It provides the deterministic dynamic that can be written as the following ordinary differential equation (ODE):
\begin{align}
    \label{eq:app_ode}
    d x = u_t(x) dt.
\end{align}

For the vector field $u_t(x)$ we know that -- starting from the distribution $p_0(x)$ -- it generates some probability path $p_t(x)$. However, as we know from \citep{song2020score, anderson1982reverse}, if we have access to the score function $\nabla_x \log p_t(x)$ of distribution $p_t(x)$, we can construct a stochastic differential equation (SDE) that, staring from distribution $p_0(x)$, generates the same probability path $p_t(x)$:
\begin{align}
    \label{eq:app_sde}
    d x = \left[ u_t(x) + \frac{g_t^2}{2} \nabla_x \log p_t(x) \right] d t + g_t d w,
\end{align}
where $g: \mathbb{R} \rightarrow \mathbb{R}_{\geq 0}$ is a scalar function, and $w$ is a standard Wiener process.

Given that
\begin{align}
    u_t(x) = \mathbb{E}_{p_t(x_1 \mid x)} \left[ u_t(x \mid x_1) \right]
    \quad \textrm{and} \quad
    \nabla_x \log p_t(x) = \mathbb{E}_{p_t(x_1 \mid x)} \left[ \nabla_x \log p_t(x \mid x_1) \right],
\end{align}
we can rewrite \cref{eq:app_sde} in the following form:
\begin{align}
    \label{eq:app_sde_reduced}
    & d x = \tilde{u}_t(x) d t + g_t d w, 
    \quad \textrm{where} \\
    \tilde{u}_t(x) = \mathbb{E}_{p_t(x_1 \mid x)} \left[ \tilde{u}_t(x \mid x_1) \right]
    & \quad \textrm{and} \quad
    \tilde{u}_t(x \mid x_1) = u_t(x \mid x_1) + \frac{g_t^2}{2} \nabla_x \log p_t(x \mid x_1).
\end{align}

Importantly, the function $g_t$ does not affect the distribution path $p_t(x)$: it only changes the stochasticity of the trajectories. In the extreme case when $g_t \equiv 0$, the SDE in \cref{eq:app_sde} coincides with the ODE in \cref{eq:app_ode}.

Moreover, if we construct the stochastic dynamics in this way, we obtain score-based models as a special case. In score-based models \citep{song2020score} the deterministic process that corresponds to probability path $p_t(x)$ has the following form:
\begin{align}
    d x = u_t(x) dt
    \quad \textrm{where} \quad
    u_t(x) = f_t(x) + \frac{g_t^2}{2} \nabla_x \log p_t(x),
\end{align}
where $f_t(x)$ is tractable. Substituting this $u_t(x)$ into $\cref{eq:app_sde}$ we obtain:
\begin{align}
    \label{eq:app_sde_score_based}
    d x = \left[ f_t(x) + g_t^2 \nabla_x \log p_t(x) \right] d t + g_t d w.
\end{align}

The SDE in \cref{eq:app_sde_score_based} only depends on the score function, so in score-based models, the aim is to learn the score function.

Now, let us note that similarly to vector field $u_t(x)$, the drift term $\tilde{u}_t(x)$ in \cref{eq:app_sde_reduced} can be expressed in terms of an end point distribution $p_t(x_1 \mid x)$. Consequently, having a variational approximation of end point distribution $q_t^{\theta}(x_1 \mid x)$, allows us to construct an approximation of the drift term $\tilde{u}_t(x)$:
\begin{align}
    \tilde{v}^\theta_t(x) 
    = \mathbb{E}_{q_t^{\theta}(x_1 \mid x)} \left[ \tilde{u}_t(x \mid x_1) \right] 
    & = v^\theta_t(x) + \frac{g_t^2}{2} s^\theta_t(x), \\
    \textrm{where} \quad
    v^\theta_t(x) = \mathbb{E}_{q_t^{\theta}(x_1 \mid x)} \left[ u_t(x \mid x_1) \right]
    \quad \textrm{and} \quad
    & s^\theta_t(x) = \mathbb{E}_{q_t^{\theta}(x_1 \mid x)} \left[ \nabla_x \log p_t(x \mid x_1) \right].
\end{align}

Thus, we may define the following approximated SDE:
\begin{align}
    \label{eq:app_sde_vfm}
    d x = \tilde{v}^\theta_t(x) d t + g_t d w \quad \textrm{or} \quad d x = \left[ v^\theta_t(x) + \frac{g_t^2}{2} s^\theta_t(x) \right] d t + g_t d w.
\end{align}

Then, \cref{eq:app_sde_vfm} is not just simply a new dynamic, it is a family of dynamics that admits deterministic dynamics as a special case when $g_t \equiv 0$. Importantly, the only thing we need to construct the stochastic process in \cref{eq:app_sde_vfm} is an approximation of the end point distributions $q_t^{\theta}(x_1 \mid x)$. Additionally we know that when $p_t(x_1 \mid x) = q_t^{\theta}(x_1 \mid x)$, $\tilde{u}_t(x) = \tilde{v}^\theta_t(x)$, the processes coincide for all functions $g_t$. Therefore, we can train the model with the same objective as in VFM.

\subsection{Variational Flow Matching as a Variational Bound on the Log-likelihood}
\label{app:proofs_vfm_bound}

In this subsection, we leverage the connections of VFM with stochastic processes to show that a reweighted integral over the point-wise VFM objective defines a bound on the data likelihood in the generative model.

\vfmbound*

\begin{proof}
Let us consider the two stochastic processes, that we discussed in \cref{app:proofs_sde}:
\begin{align}
    d x = \tilde{u}_t(x) d t + g_t d w, 
    \quad \textrm{where} \quad
    d x = \tilde{v}^\theta_t(x) d t + g_t d w.
\end{align}

Note that they both start from the same prior distribution $p_0(x)$. The first one, by design, generates probability path $p_t(x)$ and ends up in the data distribution $p_\text{data}(x)=p_1(x)$. The second process generates some probability path $q^\theta_t(x)$ that depends on the variational distribution $q^\theta_t(x_1|x)$.

We want to find a variational bound on KL divergence between $p_1(x)$ and $q^\theta_1(x)$. We start by applying the result from \citep{albergo2023stochastic} (see Lemma 2.22):
\begin{align}
    \label{eq:app_kl_first_bound}
    \text{KL} \left( p_1(x_1) \| q^\theta_1(x_1) \right)
    & \leq \mathbb{E}_{t, x} \left[ \frac{1}{2 g_t^2} 
    \left\| \tilde{u}_t(x) - \tilde{v}^\theta_t(x) \right\|_2^2 
    \right] \\
    & = \mathbb{E}_{t, x} \left[ \frac{1}{2 g_t^2} 
    \left\| \int \left( p_t(x_1 \mid x) - q^\theta_t(x_1 \mid x) \right) \tilde{u}_t(x|x_1) d x_1 \right\|_2^2 
    \right] \\
    & \leq \mathbb{E}_{t, x} \left[ \frac{1}{2 g_t^2} 
    \left( \int \Big\| 
    \left( p_t(x_1 \mid x) - q^\theta_t(x_1 \mid x) \right) 
    \tilde{u}_t(x|x_1) 
    \Big\| d x_1 \right)^2 
    \right] \\
    & \leq \mathbb{E}_{t, x} \left[ \frac{1}{2 g_t^2} 
    \left( \int 
    \left| p_t(x_1 \mid x) - q^\theta_t(x_1 \mid x) \right|
    \left\| \tilde{u}_t(x|x_1) \right\|
    d x_1 \right)^2 
    \right].
\end{align}

Now, let us introduce two auxiliary functions:
\begin{align}
    l_t(x) = \sup_{x_1} \left\| \tilde{u}_t(x|x_1) \right\|
    \quad \textrm{and} \quad
    \lambda_t(x) = \frac{l_t(x)}{g_t^2}.
\end{align}

If we utilise $\lambda_t(x)$ to write down the following bound, we see that
\begin{align}
    \text{KL} \left( p_1(x_1) \| q^\theta_1(x_1) \right)
    & \leq \mathbb{E}_{t, x} \left[ \frac{\lambda_t(x)}{2}
    \left( \int 
    \left| p_t(x_1 \mid x) - q^\theta_t(x_1 \mid x) \right|
    d x_1 \right)_2^2 
    \right].
\end{align}

Next, we can apply Pinsker's inequality, which states that for two probability distributions $p$ and $q$ the following holds:
\begin{align}
    \int | p(x) - q(x) | d x \leq 2 \text{KL}(p\|q).
\end{align}

Applying it to the inner integral, we have:
\begin{align}
    \text{KL} \left( p_1(x_1) \| q^\theta_1(x_1) \right)
    & \leq \mathbb{E}_{t, x} \left[ \lambda_t(x)
    \text{KL} \Big( p_t(x_1 \mid x) \| q^\theta_t(x_1 \mid x) \Big) 
    \right].
\end{align}

We may rewrite the left part of inequality as a combination of the data entropy and the model's likelihood, where only the likelihood depends on parameters $\theta$:
\begin{align}
    \text{KL} \left( p_1(x_1) \| q^\theta_1(x_1) \right)
    = - \text{H} \left( p_1(x_1) \right)
    - \mathbb{E}_{x_1} \left[ \log q^\theta_1(x_1) \right].
\end{align}

The right part of inequality can be rewritten as an expectation of entropy that does not depend on any parameters $\theta$ plus a reweighted VFM objective with weighting coefficient $\lambda_t(x)$:
\begin{align}
    \mathbb{E}_{t, x} \left[ \lambda_t(x)
    \text{KL} \Big( p_t(x_1 \mid x) \| q^\theta_t(x_1 \mid x) \Big) 
    \right]
    = & - \mathbb{E}_{t, x} \left[ \lambda_t(x)
    \text{H} ( p_t(x_1 \mid x))
    \right] \\
    & - \mathbb{E}_{t, x, x_1} \left[ \lambda_t(x)
    q^\theta_t(x_1 \mid x)
    \right]
\end{align}

We see that this reweighted version of the VFM objective defines an upper bound on the model likelihood, which was what we wanted to show.\end{proof}

\subsection{Stochastic Dynamics under Linearity Conditions}
\label{app:proofs_mean_field_elbo}

In this subsection, we discuss the connection between VFM and stochastic dynamics under the condition of linearity in $x_1$ of the conditional vector field $u_t(x \mid x_1)$ and conditional score function $\nabla_x \log p_t(x \mid x_1)$.

As we discuss in \cref{sec:decomp_fm}, under the linearity condition, we may express $u_t(x)$ in terms of any distribution of end points $r_t(x_1 \mid x)$ if it has the same marginals as $p_t(x_1 \mid x)$. In \cref{app:proofs_score}, we demonstrate that the score function $\nabla_x \log p_t(x)$ can also be expressed in terms of end point distributions $p_t(x_1 \mid x)$. Therefore, the score function may also be equally expressed in terms of distribution $r_t(x_1 \mid x)$ if it has the same marginals as $p_t(x_1\mid x)$. This fact is easy to show in the same way as we present in \cref{app:proofs_mean_field}.

Consequently, under the linearity conditions, the drift term $\tilde{u}_t(x)$ can also be expressed in terms of $r_t(x_1\mid x)$, as it is just a linear combination of the vector field $u_t(x)$ and score function $\nabla_x \log p_t(x)$. Hence, the transition from the distribution $p_(x_1 \mid x)$ to some distribution $r_(x_1 \mid x)$ does not affect the discussion of connections of stochastic dynamics in \cref{app:proofs_sde}.

Furthermore, the transition from distribution $p_t(x_1 \mid x)$ to some distribution $r_t(x_1 \mid x)$ does not affect connections of the VFM objective with the model likelihood that we discuss in \cref{app:proofs_vfm_bound}. It is easy to see that in the derivations, we only rely on functions $\tilde{u}_t(x)$ and $\tilde{v}^\theta_t(x)$ (see \cref{eq:app_kl_first_bound}). However, as we discussed, they are not affected by the transition from $p_t(x_1 \mid x)$ to some $r_t(x_1 \mid x)$. Therefore, we may repeat all the same derivations for some $r_(x_1|x)$ using a factorized distribution.

\section{Algorithms}
\label{app:algorithms}

\subsection{General Variational Flow Matching}
\label{app:general_alg}


\begin{algorithm}[H]
\caption{Variational Flow Matching}
\label{alg:example}
\begin{algorithmic}
\STATE \textbf{\# Training}
\STATE Sample $x_1$ from data and $x_0 \sim p_0(x_0)$
\STATE Sample $t \sim \mathcal{U}(0, 1)$
\STATE Compute $x_t = t x_1 + (1-t) x_0$
\STATE Compute loss $\mathcal{L} = -\log q_t^{\theta}(x_1 \mid x_t)$
\STATE Backpropagate.
\STATE 
\STATE \textbf{\# Generation}
\STATE Sample $x_0 \sim p_0(x_0)$
\STATE Solve ODE $x_1 = x_0 + \int_{t=0}^{t=1} \frac{\mathbb{E}_{q_t^\theta}\left[x_1 \mid x_t\right] - x_t}{1 - t + \varepsilon} \text{d}t$
\STATE Return $x_1$
\end{algorithmic}
\end{algorithm}

\subsection{Categorical Variational Flow Matching (CatFlow)}
\label{app:categorical_alg}

\begin{algorithm}[H]
\caption{Categorical Variational Flow Matching (CatFlow)}
\label{alg:example}
\begin{algorithmic}
\STATE \textit{\# Assume $q_t(x_1 \mid x_t) = \prod_{d=1}^D \mathsf{Cat}(x^d_1 \mid \mu^d_t(x_t))$ where $\mu$ is learnable}
\STATE
\STATE \textbf{\# Training}
\STATE Sample $x_1$ from data and $x_0 \sim p_0(x_0)$
\STATE Sample $t \sim \mathcal{U}(0, 1)$
\STATE Compute $x_t = t x_1 + (1-t) x_0$
\STATE Compute cross-entropy loss $\mathcal{L} = -\sum_{d=1}^D \sum_{k=1}^{K^d} \mathbb{I}[x_1^d = k] \log \mu^{dk}_t(x)$
\STATE Backpropagate.
\STATE 
\STATE \textbf{\# Generation}
\STATE Sample $x_0 \sim p_0(x_0)$
\STATE Solve ODE $x_1 = x_0 + \int_{t=0}^{t=1} \frac{\mu_t(x_t) - x_t}{1 - t + \varepsilon} \text{d}t$
\STATE Return $x_1$
\end{algorithmic}
\end{algorithm}

\subsection{Gaussian Variational Flow Matching}
\label{app:gaussian_alg}

\begin{algorithm}[!h]
\caption{Gaussian Variational Flow Matching}
\label{alg:example}
\begin{algorithmic}
\STATE \textit{\# Assume $q_t(x_1 \mid x_t) = 
 \prod_{d=1}^D \mathcal{N}(x^d_1 \mid \mu^d_t(x_t),  I)$ where $\mu$ is learnable}
\STATE
\STATE \textbf{\# Training}
\STATE Sample $x_1$ from data and $x_0 \sim p_0(x_0)$
\STATE Sample $t \sim \mathcal{U}(0, 1)$
\STATE Compute $x_t = t x_1 + (1-t) x_0$
\STATE Compute mean squared error loss $\mathcal{L} = \frac{1}{2} \sum_{d=1}^D ||\mu^d_t(x_t) - x^d_1||^2$
\STATE Backpropagate.
\STATE 
\STATE \textbf{\# Generation}
\STATE Sample $x_0 \sim p_0(x_0)$
\STATE Solve ODE $x_1 = x_0 + \int_{t=0}^{t=1} \frac{\mu_t(x_t) - x_t}{1 - t + \varepsilon} \text{d}t$
\STATE Return $x_1$
\end{algorithmic}
\end{algorithm}

\section{Additional Results}
\label{app:det_res}

\subsection{Detailed results}

Here, we provide the results for CatFlow with standard deviations, as computed as in \cite{jo2022scorebased} through multiple seeds.

\begin{table}[h!]
\small
\centering
 \begin{tabular}{cccccc}
	\toprule
	\multicolumn{3}{c}{\textbf{Ego-small}} & \multicolumn{3}{c}{\textbf{Community-small}}  \\
	\cmidrule(lr){1-3} \cmidrule(lr){4-6}
	\multicolumn{3}{c}{$4 \leq |\mathcal{V}| \leq 18$}  & \multicolumn{3}{c}{$12 \leq |\mathcal{V}| \leq 20$} \\
	\cmidrule(lr){1-3} \cmidrule(lr){4-6} 
	 Degree $\downarrow$ & Clustering $\downarrow$ & Orbit $\downarrow$ & Degree $\downarrow$ & Clustering $\downarrow$ & Orbit $\downarrow$ \\
	\midrule
	 $0.013\pm 
0.007$ & $0.024\pm0.009$  & $0.008\pm0.005$  & $0.018 \pm 0.012$ & $0.086 \pm 0.021$&  $0.007 \pm 0.005$ \\
	\bottomrule
\end{tabular}
\end{table}

\begin{table}[h!]
\small
\centering
 \begin{tabular}{cccccc}
	\toprule
	\multicolumn{3}{c}{\textbf{Enzymes}} & \multicolumn{3}{c}{\textbf{Grid}}  \\
	\cmidrule(lr){1-3} \cmidrule(lr){4-6}
	\multicolumn{3}{c}{$10 \leq |\mathcal{V}| \leq 125$}  & \multicolumn{3}{c}{$100 \leq |\mathcal{V}| \leq 400$} \\
	\cmidrule(lr){1-3} \cmidrule(lr){4-6} 
	Degree $\downarrow$ & Clustering $\downarrow$ & Orbit $\downarrow$ & Degree $\downarrow$ & Clustering $\downarrow$ & Orbit $\downarrow$ \\
	\midrule
	 $0.013\pm0.012 $&$0.062\pm0.011$& $0.008\pm0.007$& $0.115 \pm0.010$ &$0.004\pm0.002$& $0.075\pm0.071$ \\
	\bottomrule
\end{tabular}
\end{table}

\begin{table}[h!]
\small
	\centering
	\begin{tabular}{cccccc}
		\toprule
		\multicolumn{3}{c}{\textbf{QM9}} & \multicolumn{3}{c}{\textbf{ZINC250k}} \\
		\cmidrule(lr){1-3} \cmidrule(lr){4-6}
		\multicolumn{3}{c}{$1 \leq |\mathcal{V}| \leq 9$, \text{4 atom types}} & \multicolumn{3}{c}{$6 \leq |\mathcal{V}| \leq 38$, \text{9 atom types}} \\
		\cmidrule(lr){1-3} \cmidrule(lr){4-6}
		 Valid $\uparrow$ & Unique $\uparrow$ & FCD $\downarrow$ & Valid $\uparrow$& Unique $\uparrow$  & FCD $\downarrow$ \\
		\midrule
		$ 99.81 \pm 0.03$ & $99.95 \pm 0.02$ & $0.441 \pm 0.023$ &  $99.21 \pm 0.04$ & $100.00 \pm 0.00$  & $13.211 \pm 0.12$ \\
		\bottomrule 
	\end{tabular}
	\label{app:tab:res_mols}
\end{table}

\subsection{Extra results SBM and Planar Graphs}
Results on Stochastic Block Model and Planar Graphs. We ran extra experiments for (standard) flow matching and Dirichlet flow matching. We observe that CatFlow obtains SOTA performance on all tasks and metrics. Moreover, it is worth noting CatFlow was significantly faster to train than Dirichlet FM due to a computationally cheaper forward process.

\begin{table}[h!]
    \centering
    \begin{tabular}{lcccc}
        \toprule
         Model & Deg $\downarrow$  & Clus $\downarrow$  & Orb $\downarrow$  & V.U.N. $\uparrow$ \\
         \midrule
                  \midrule
         \multicolumn{5}{c}{Stochastic Block Model} \\
         \midrule
         SPECTRE \cite{martinkus2022spectre} & 1.9 & 1.6 & 1.6 & 53\% \\
         ConGress  \cite{vignac2022digress} & 34.1 & 3.1 & 4.5 & 0\% \\
        DiGress \cite{vignac2022digress} & 1.6  & \textbf{1.5} & 1.7 & 74\% \\
         Flow Matching \cite{lipman2023flow} & 10.2 & 2.0 & 3.2 & 22\% \\
         Dirichlet FM \cite{stark2024dirichlet}  & 1.7 & \textbf{1.5} & \textbf{1.4} & 80\% \\
                 \midrule
         \textbf{CatFlow} & \textbf{1.5} & \textbf{1.5} & \textbf{1.4} & \textbf{85}\% \\
         \midrule
         \midrule
        \multicolumn{5}{c}{Planar Graphs} \\
        \midrule
        SPECTRE \cite{martinkus2022spectre}  & 2.5 & 2.5 & 2.4 & 25\% \\
        ConGress  \cite{vignac2022digress} & 23.8 & 8.8 & 2590 & 0\% \\
        DiGress  \cite{vignac2022digress} & \textbf{1.4} & \textbf{1.2} & 1.7 & 75\% \\
        Flow Matching \cite{lipman2023flow} & 5.1 & 5.6 & 5.5 & 30\% \\
        Dirchlet FM \cite{stark2024dirichlet} & 1.5 & 1.3 & \textbf{1.5} & \textbf{80}\% \\
                \midrule
        \textbf{CatFlow} & \textbf{1.4} & 1.3 & \textbf{1.5} & \textbf{80}\% \\
         \bottomrule
    \end{tabular}
    \label{app:tab:res_sbm_pg}
\end{table}

\section{Experimental setup}
\label{app:exp_setup}

\subsection{Model}

To ensure a comparison on equal terms to baselines, we employ the same graph transformer network as proposed in \cite{dwivedi2020generalization}, which was also used in \cite{vignac2022digress}, along with the same hyper-parameter setup. We summarize the parametrization of our network here.

Just as done in DiGress, our graph transformer takes as input a graph \((\mathbf{H}_n, \mathbf{H}_e)\) and predicts a distribution over the clean graphs, using structural and spectral features to improve the network expressivity, which we denote as $\mathbf{H}_g$. Each transformer layer does the following operations:

\begin{enumerate}
    \item \textbf{Node Features \(\mathbf{H}_n\) and Edge Features \(\mathbf{H}_e\)}:
    \begin{enumerate}
        \item \textbf{Linear Transformation}: Apply linear transformations to both \(\mathbf{H}_n\) and \(\mathbf{H}_e\).
        \item \textbf{Outer Product and Scaling}: Compute the outer product of the transformed features and apply scaling.
    \end{enumerate}

    \item \textbf{Node Features \(\mathbf{H}_n\)}:
    \begin{enumerate}
        \item \textbf{Feature-wise Linear Modulation (FiLM)}: Apply FiLM to the transformed node features using global features \(\mathbf{H}_g\).
    \end{enumerate}

    \item \textbf{Edge Features \(\mathbf{H}_e\)}:
    \begin{enumerate}
        \item \textbf{Feature-wise Linear Modulation (FiLM)}: Apply FiLM to the transformed edge features using global features \(\mathbf{H}_g\).
    \end{enumerate}

    \item \textbf{Self-Attention Mechanism}:
    \begin{enumerate}
        \item \textbf{Linear Transformation}: Apply a linear transformation to the transformed node features.
        \item \textbf{Softmax Operation}: Compute the attention scores using the softmax function.
        \item \textbf{Attention Score Calculation}: Calculate the weighted sum of the transformed node features based on the attention scores.
    \end{enumerate}

    \item \textbf{Global Features \(y\)}:
    \begin{enumerate}
        \item \textbf{Pooling}: Apply PNA pooling to the node features \(\mathbf{H}_n\) and edge features \(\mathbf{H}_e\).
        \item \textbf{Summation}: Sum the pooled features with the global features \(\mathbf{H}_g\).
    \end{enumerate}

    \item \textbf{Final Outputs}:
    \begin{enumerate}
        \item \textbf{Node Features \(\mathbf{H}_n'\)}: Obtain updated node features after the attention mechanism.
        \item \textbf{Edge Features \(\mathbf{H}_e'\)}: Obtain updated edge features after the attention mechanism.
        \item \textbf{Global Features \(\mathbf{H}_g'\)}: Obtain updated global features after summation.
    \end{enumerate}
\end{enumerate}

Here, the FiLM operation is defined as:
\[
\text{FiLM}(M_1, M_2) = M_1 W_1 + (M_1 W_2) \odot M_2 + M_2
\]
for learnable weight matrices \(W_1\) and \(W_2\), and PNA is defined as:
\[
\text{PNA}(X) = \text{cat}(\text{max}(X), \text{min}(X), \text{mean}(X), \text{std}(X)) W.
\]

\subsection{Hyperparameters and Computational Costs}
We report the hyperparameters here:
\begin{table}[h]
    \centering
    \begin{tabular}{lccc}
    \toprule
        Hyperparameter & Abstract & QM9/ZINC250k & Ablation \\
        \midrule
        Optimizer & AdamW & AdamW & AdamW \\
        Scheduler & Cosine Annealing & Cosine Annealing & Cosine Annealing \\
        Learning Rate & $2 \cdot 10^{-4}$  & $2 \cdot 10^{-4}$  & $2 \cdot 10^{-4}$ \\
        Weight Decay & $1 \cdot 10^{-12}$ & $1 \cdot 10^{-1
        2}$ & $1 \cdot 10^{-12}$ \\
        EMA & 0.999 & 0.999 & 0.999 \\
        \bottomrule
    \end{tabular}
    \caption{Hyperparameter setup.}
    \label{tab:my_label}
\end{table}

All models were trained until convergence. Furthermore, all data splits are kept the same as in \cite{jo2022scorebased}, and hidden dimensions are kept the same as \cite{vignac2022digress}. All experiments were run on a single NVIDIA RTX 6000 and took about a day to run.

\section{Detail CNFs}
\label{app:cnf}

To compute the resulting distribution $p_t$ for CNF, one can use the change of variables formula: 
\begin{equation}
    [\varphi_t]_* p_0(x) = p_0 (\varphi_t^{-1}(x)) \text{det} \left[\frac{\partial \varphi_t^{-1}}{\partial x} (x)\right].
\end{equation}
This induces a \textit{probability path}, i.e. a mapping $p_t: [0, 1] \times \mathbb{R}^D \to \mathbb{R}_{>0}$ such that $\int p_t(x) dx = 1$ for all $t \in [0, 1]$. We say that $v_t$ \textit{generates} this probability path given a starting distribution $p_0$. In theory, one could try and optimize the empirical divergence between the resulting distribution $p_1$ and target distribution, but obtaining a gradient sample for the loss requires us to solve the ODE at each step during training, making this approach computationally prohibitive. 

One way to assess if a vector field generates a specific probability path is using the continuity equation, i.e. we can assess whether $v_t$ and $p_t$ satisfy
\begin{equation}
    \frac{\partial}{\partial t}  p_t(x) + \nabla \cdot (p_t(x) v_t(x)) = 0,
\end{equation}
where $\nabla$ is the divergence operator. Note that by sampling $x_0 \sim p_0$, a new sample from $p_1$ can be generated by following this ODE, i.e. integrating 
\begin{equation}
    x_1 = x_0 + \int_0^1 v_t (x_t) \mathrm{d}t.
\end{equation}

\newpage

\newpage

\section*{NeurIPS Paper Checklist}

\begin{enumerate}
	
	\item {\bf Claims}
	\item[] Question: Do the main claims made in the abstract and introduction accurately reflect the paper's contributions and scope?
	\item[] Answer: \answerYes{}{} 
	\item[] Justification: Our paper follows the structure of the abstract and introduction directly.
	\item[] Guidelines:
	\begin{itemize}
		\item The answer NA means that the abstract and introduction do not include the claims made in the paper.
		\item The abstract and/or introduction should clearly state the claims made, including the contributions made in the paper and important assumptions and limitations. A No or NA answer to this question will not be perceived well by the reviewers. 
		\item The claims made should match theoretical and experimental results, and reflect how much the results can be expected to generalize to other settings. 
		\item It is fine to include aspirational goals as motivation as long as it is clear that these goals are not attained by the paper. 
	\end{itemize}
	
	\item {\bf Limitations}
	\item[] Question: Does the paper discuss the limitations of the work performed by the authors?
	\item[] Answer: \answerYes{}{} 
	\item[] Justification: We added a section addressing limitations in \cref{sec:concl}.
	\item[] Guidelines:
	\begin{itemize}
		\item The answer NA means that the paper has no limitation while the answer No means that the paper has limitations, but those are not discussed in the paper. 
		\item The authors are encouraged to create a separate "Limitations" section in their paper.
		\item The paper should point out any strong assumptions and how robust the results are to violations of these assumptions (e.g., independence assumptions, noiseless settings, model well-specification, asymptotic approximations only holding locally). The authors should reflect on how these assumptions might be violated in practice and what the implications would be.
		\item The authors should reflect on the scope of the claims made, e.g., if the approach was only tested on a few datasets or with a few runs. In general, empirical results often depend on implicit assumptions, which should be articulated.
		\item The authors should reflect on the factors that influence the performance of the approach. For example, a facial recognition algorithm may perform poorly when image resolution is low or images are taken in low lighting. Or a speech-to-text system might not be used reliably to provide closed captions for online lectures because it fails to handle technical jargon.
		\item The authors should discuss the computational efficiency of the proposed algorithms and how they scale with dataset size.
		\item If applicable, the authors should discuss possible limitations of their approach to address problems of privacy and fairness.
		\item While the authors might fear that complete honesty about limitations might be used by reviewers as grounds for rejection, a worse outcome might be that reviewers discover limitations that aren't acknowledged in the paper. The authors should use their best judgment and recognize that individual actions in favor of transparency play an important role in developing norms that preserve the integrity of the community. Reviewers will be specifically instructed to not penalize honesty concerning limitations.
	\end{itemize}
	
	\item {\bf Theory Assumptions and Proofs}
	\item[] Question: For each theoretical result, does the paper provide the full set of assumptions and a complete (and correct) proof?
	\item[] Answer: \answerYes{} 
	\item[] Justification: We provide all proofs in detail in \cref{app:proofs}.
	\item[] Guidelines:
	\begin{itemize}
		\item The answer NA means that the paper does not include theoretical results. 
		\item All the theorems, formulas, and proofs in the paper should be numbered and cross-referenced.
		\item All assumptions should be clearly stated or referenced in the statement of any theorems.
		\item The proofs can either appear in the main paper or the supplemental material, but if they appear in the supplemental material, the authors are encouraged to provide a short proof sketch to provide intuition. 
		\item Inversely, any informal proof provided in the core of the paper should be complemented by formal proofs provided in appendix or supplemental material.
		\item Theorems and Lemmas that the proof relies upon should be properly referenced. 
	\end{itemize}
	
	\item {\bf Experimental Result Reproducibility}
	\item[] Question: Does the paper fully disclose all the information needed to reproduce the main experimental results of the paper to the extent that it affects the main claims and/or conclusions of the paper (regardless of whether the code and data are provided or not)?
	\item[] Answer: \answerYes{} 
	\item[] Justification: Besides code, we provide the model description and hyperparameters in \cref{app:exp_setup}. Moreover, our model and datasets are common in the field, e.g. \cite{jo2022scorebased}, so no novel experimental details are required. Our main contribution, the objective, is clearly stated in \cref{sec:method}.
	\item[] Guidelines:
	\begin{itemize}
		\item The answer NA means that the paper does not include experiments.
		\item If the paper includes experiments, a No answer to this question will not be perceived well by the reviewers: Making the paper reproducible is important, regardless of whether the code and data are provided or not.
		\item If the contribution is a dataset and/or model, the authors should describe the steps taken to make their results reproducible or verifiable. 
		\item Depending on the contribution, reproducibility can be accomplished in various ways. For example, if the contribution is a novel architecture, describing the architecture fully might suffice, or if the contribution is a specific model and empirical evaluation, it may be necessary to either make it possible for others to replicate the model with the same dataset, or provide access to the model. In general. releasing code and data is often one good way to accomplish this, but reproducibility can also be provided via detailed instructions for how to replicate the results, access to a hosted model (e.g., in the case of a large language model), releasing of a model checkpoint, or other means that are appropriate to the research performed.
		\item While NeurIPS does not require releasing code, the conference does require all submissions to provide some reasonable avenue for reproducibility, which may depend on the nature of the contribution. For example
		\begin{enumerate}
			\item If the contribution is primarily a new algorithm, the paper should make it clear how to reproduce that algorithm.
			\item If the contribution is primarily a new model architecture, the paper should describe the architecture clearly and fully.
			\item If the contribution is a new model (e.g., a large language model), then there should either be a way to access this model for reproducing the results or a way to reproduce the model (e.g., with an open-source dataset or instructions for how to construct the dataset).
			\item We recognize that reproducibility may be tricky in some cases, in which case authors are welcome to describe the particular way they provide for reproducibility. In the case of closed-source models, it may be that access to the model is limited in some way (e.g., to registered users), but it should be possible for other researchers to have some path to reproducing or verifying the results.
		\end{enumerate}
	\end{itemize}

	\item {\bf Open access to data and code}
	\item[] Question: Does the paper provide open access to the data and code, with sufficient instructions to faithfully reproduce the main experimental results, as described in supplemental material?
	\item[] Answer: \answerYes{} 
	\item[] Justification: We provide all code to the experiments. Moreover, all data is publically available, and will be provided in the repository, including data processing.
	\item[] Guidelines:
	\begin{itemize}
		\item The answer NA means that paper does not include experiments requiring code.
		\item Please see the NeurIPS code and data submission guidelines (\url{https://nips.cc/public/guides/CodeSubmissionPolicy}) for more details.
		\item While we encourage the release of code and data, we understand that this might not be possible, so “No” is an acceptable answer. Papers cannot be rejected simply for not including code, unless this is central to the contribution (e.g., for a new open-source benchmark).
		\item The instructions should contain the exact command and environment needed to run to reproduce the results. See the NeurIPS code and data submission guidelines (\url{https://nips.cc/public/guides/CodeSubmissionPolicy}) for more details.
		\item The authors should provide instructions on data access and preparation, including how to access the raw data, preprocessed data, intermediate data, and generated data, etc.
		\item The authors should provide scripts to reproduce all experimental results for the new proposed method and baselines. If only a subset of experiments are reproducible, they should state which ones are omitted from the script and why.
		\item At submission time, to preserve anonymity, the authors should release anonymized versions (if applicable).
		\item Providing as much information as possible in supplemental material (appended to the paper) is recommended, but including URLs to data and code is permitted.
	\end{itemize}

	\item {\bf Experimental Setting/Details}
	\item[] Question: Does the paper specify all the training and test details (e.g., data splits, hyperparameters, how they were chosen, type of optimizer, etc.) necessary to understand the results?
	\item[] Answer: \answerYes{}{} 
	\item[] Justification: We provide the hyperparameters and experimental detail \cref{app:exp_setup}. Moreover, we took all of our choices from previous works of \cite{jo2022scorebased}, which is also publicly available. Last, our code will be made public.
	\item[] Guidelines:
	\begin{itemize}
		\item The answer NA means that the paper does not include experiments.
		\item The experimental setting should be presented in the core of the paper to a level of detail that is necessary to appreciate the results and make sense of them.
		\item The full details can be provided either with the code, in appendix, or as supplemental material.
	\end{itemize}
	
	\item {\bf Experiment Statistical Significance}
	\item[] Question: Does the paper report error bars suitably and correctly defined or other appropriate information about the statistical significance of the experiments?
	\item[] Answer: \answerYes{}{} 
	\item[] Justification: We provide error bars and how they are computed in \cref{app:det_res}.
	\item[] Guidelines:
	\begin{itemize}
		\item The answer NA means that the paper does not include experiments.
		\item The authors should answer "Yes" if the results are accompanied by error bars, confidence intervals, or statistical significance tests, at least for the experiments that support the main claims of the paper.
		\item The factors of variability that the error bars are capturing should be clearly stated (for example, train/test split, initialization, random drawing of some parameter, or overall run with given experimental conditions).
		\item The method for calculating the error bars should be explained (closed form formula, call to a library function, bootstrap, etc.)
		\item The assumptions made should be given (e.g., Normally distributed errors).
		\item It should be clear whether the error bar is the standard deviation or the standard error of the mean.
		\item It is OK to report 1-sigma error bars, but one should state it. The authors should preferably report a 2-sigma error bar than state that they have a 96\% CI, if the hypothesis of Normality of errors is not verified.
		\item For asymmetric distributions, the authors should be careful not to show in tables or figures symmetric error bars that would yield results that are out of range (e.g. negative error rates).
		\item If error bars are reported in tables or plots, The authors should explain in the text how they were calculated and reference the corresponding figures or tables in the text.
	\end{itemize}
	
	\item {\bf Experiments Compute Resources}
	\item[] Question: For each experiment, does the paper provide sufficient information on the computer resources (type of compute workers, memory, time of execution) needed to reproduce the experiments?
	\item[] Answer: \answerYes{}{} 
	\item[] Justification: We provide this information in \cref{app:exp_setup}.
	\item[] Guidelines:
	\begin{itemize}
		\item The answer NA means that the paper does not include experiments.
		\item The paper should indicate the type of compute workers CPU or GPU, internal cluster, or cloud provider, including relevant memory and storage.
		\item The paper should provide the amount of compute required for each of the individual experimental runs as well as estimate the total compute. 
		\item The paper should disclose whether the full research project required more compute than the experiments reported in the paper (e.g., preliminary or failed experiments that didn't make it into the paper). 
	\end{itemize}
	
	\item {\bf Code Of Ethics}
	\item[] Question: Does the research conducted in the paper conform, in every respect, with the NeurIPS Code of Ethics \url{https://neurips.cc/public/EthicsGuidelines}?
	\item[] Answer: \answerYes{} 
	\item[] Justification: We read and checked the code of ethics.
	\item[] Guidelines:
	\begin{itemize}
		\item The answer NA means that the authors have not reviewed the NeurIPS Code of Ethics.
		\item If the authors answer No, they should explain the special circumstances that require a deviation from the Code of Ethics.
		\item The authors should make sure to preserve anonymity (e.g., if there is a special consideration due to laws or regulations in their jurisdiction).
	\end{itemize}

	\item {\bf Broader Impacts}
	\item[] Question: Does the paper discuss both potential positive societal impacts and negative societal impacts of the work performed?
	\item[] Answer: \answerYes{} 
	\item[] Justification: We reflect on this in our conclusion, i.e. \cref{sec:concl}.
	\item[] Guidelines:
	\begin{itemize}
		\item The answer NA means that there is no societal impact of the work performed.
		\item If the authors answer NA or No, they should explain why their work has no societal impact or why the paper does not address societal impact.
		\item Examples of negative societal impacts include potential malicious or unintended uses (e.g., disinformation, generating fake profiles, surveillance), fairness considerations (e.g., deployment of technologies that could make decisions that unfairly impact specific groups), privacy considerations, and security considerations.
		\item The conference expects that many papers will be foundational research and not tied to particular applications, let alone deployments. However, if there is a direct path to any negative applications, the authors should point it out. For example, it is legitimate to point out that an improvement in the quality of generative models could be used to generate deepfakes for disinformation. On the other hand, it is not needed to point out that a generic algorithm for optimizing neural networks could enable people to train models that generate Deepfakes faster.
		\item The authors should consider possible harms that could arise when the technology is being used as intended and functioning correctly, harms that could arise when the technology is being used as intended but gives incorrect results, and harms following from (intentional or unintentional) misuse of the technology.
		\item If there are negative societal impacts, the authors could also discuss possible mitigation strategies (e.g., gated release of models, providing defenses in addition to attacks, mechanisms for monitoring misuse, mechanisms to monitor how a system learns from feedback over time, improving the efficiency and accessibility of ML).
	\end{itemize}
	
	\item {\bf Safeguards}
	\item[] Question: Does the paper describe safeguards that have been put in place for responsible release of data or models that have a high risk for misuse (e.g., pretrained language models, image generators, or scraped datasets)?
	\item[] Answer: \answerNA{} 
	\item[] Justification: We do not publish such models and as such do not require safeguards for them.
	\item[] Guidelines:
	\begin{itemize}
		\item The answer NA means that the paper poses no such risks.
		\item Released models that have a high risk for misuse or dual-use should be released with necessary safeguards to allow for controlled use of the model, for example by requiring that users adhere to usage guidelines or restrictions to access the model or implementing safety filters. 
		\item Datasets that have been scraped from the Internet could pose safety risks. The authors should describe how they avoided releasing unsafe images.
		\item We recognize that providing effective safeguards is challenging, and many papers do not require this, but we encourage authors to take this into account and make a best faith effort.
	\end{itemize}
	
	\item {\bf Licenses for existing assets}
	\item[] Question: Are the creators or original owners of assets (e.g., code, data, models), used in the paper, properly credited and are the license and terms of use explicitly mentioned and properly respected?
	\item[] Answer: \answerYes{} 
	\item[] Justification: We cite the authors of the relevant assets where used, see \cref{sec:experiments_results}. All relevant information about licenses is publicly available and respected in our code.
	\item[] Guidelines:
	\begin{itemize}
		\item The answer NA means that the paper does not use existing assets.
		\item The authors should cite the original paper that produced the code package or dataset.
		\item The authors should state which version of the asset is used and, if possible, include a URL.
		\item The name of the license (e.g., CC-BY 4.0) should be included for each asset.
		\item For scraped data from a particular source (e.g., website), the copyright and terms of service of that source should be provided.
		\item If assets are released, the license, copyright information, and terms of use in the package should be provided. For popular datasets, \url{paperswithcode.com/datasets} has curated licenses for some datasets. Their licensing guide can help determine the license of a dataset.
		\item For existing datasets that are re-packaged, both the original license and the license of the derived asset (if it has changed) should be provided.
		\item If this information is not available online, the authors are encouraged to reach out to the asset's creators.
	\end{itemize}
	
	\item {\bf New Assets}
	\item[] Question: Are new assets introduced in the paper well documented and is the documentation provided alongside the assets?
	\item[] Answer: \answerNA{}{} 
	\item[] Justification: We provide a new objective to train and use previous code/assets for the experiments which are referenced.
	\item[] Guidelines:
	\begin{itemize}
		\item The answer NA means that the paper does not release new assets.
		\item Researchers should communicate the details of the dataset/code/model as part of their submissions via structured templates. This includes details about training, license, limitations, etc. 
		\item The paper should discuss whether and how consent was obtained from people whose asset is used.
		\item At submission time, remember to anonymize your assets (if applicable). You can either create an anonymized URL or include an anonymized zip file.
	\end{itemize}
	
	\item {\bf Crowdsourcing and Research with Human Subjects}
	\item[] Question: For crowdsourcing experiments and research with human subjects, does the paper include the full text of instructions given to participants and screenshots, if applicable, as well as details about compensation (if any)? 
	\item[] Answer: \answerNA{} 
	\item[] Justification: This work did not involve crowdsourcing nor research with human subjects.
	\item[] Guidelines:
	\begin{itemize}
		\item The answer NA means that the paper does not involve crowdsourcing nor research with human subjects.
		\item Including this information in the supplemental material is fine, but if the main contribution of the paper involves human subjects, then as much detail as possible should be included in the main paper. 
		\item According to the NeurIPS Code of Ethics, workers involved in data collection, curation, or other labor should be paid at least the minimum wage in the country of the data collector. 
	\end{itemize}
	
	\item {\bf Institutional Review Board (IRB) Approvals or Equivalent for Research with Human Subjects}
	\item[] Question: Does the paper describe potential risks incurred by study participants, whether such risks were disclosed to the subjects, and whether Institutional Review Board (IRB) approvals (or an equivalent approval/review based on the requirements of your country or institution) were obtained?
	\item[] Answer: \answerNA{} 
	\item[] Justification: We did not work with subjects.
	\item[] Guidelines:
	\begin{itemize}
		\item The answer NA means that the paper does not involve crowdsourcing nor research with human subjects.
		\item Depending on the country in which research is conducted, IRB approval (or equivalent) may be required for any human subjects research. If you obtained IRB approval, you should clearly state this in the paper. 
		\item We recognize that the procedures for this may vary significantly between institutions and locations, and we expect authors to adhere to the NeurIPS Code of Ethics and the guidelines for their institution. 
		\item For initial submissions, do not include any information that would break anonymity (if applicable), such as the institution conducting the review.
	\end{itemize}
	
\end{enumerate}

\end{document}